\documentclass{article} 
\usepackage{iclr2027_conference,times}

\usepackage{amsmath,amsfonts,bm}

\def\eqref#1{equation~\ref{#1}}

\def\1{\bm{1}}

\DeclareMathAlphabet{\mathsfit}{\encodingdefault}{\sfdefault}{m}{sl}
\SetMathAlphabet{\mathsfit}{bold}{\encodingdefault}{\sfdefault}{bx}{n}

\usepackage{hyperref}
\usepackage{url}
\usepackage{booktabs} 
\usepackage{amsthm}
\usepackage{graphicx}
\usepackage{subcaption}
\newtheorem{theorem}{Theorem}
\newtheorem{proposition}{Proposition}
\newtheorem{lemma}{Lemma}

\usepackage{algorithm}
\usepackage{algpseudocode}
\usepackage{multirow}
\usepackage{wrapfig}

\title{LSC-DPO: Learning-Signal-Controlled \\ Direct Preference Optimization}

\author{
	Yang Qu$^{1}$ \quad
	Yusheng Han$^{1}$ \quad
	Chengjia Feng$^{1}$ \quad
	Handan Liu$^{*}$ \\
	$^{1}$Northeastern University \\
	\texttt{\{qu.yang1, han.yus, feng.chen, h.liu\}@northeastern.edu} \\
	$^{*}$Corresponding author
}

\iclrfinalcopy 
\begin{document}

\maketitle
\lhead{}

\begin{abstract}
Direct Preference Optimization (DPO) has become a standard reward-model-free approach for aligning language models with preference data. However, as the scaled preference margin grows during training, the logistic DPO loss becomes progressively less sensitive to further changes. We study DPO from a loss-level geometric perspective and identify the sigmoid factor as a learning signal that characterizes the local sensitivity of the objective. Based on this view, we propose Learning-Signal-Controlled Direct Preference Optimization (LSC-DPO), which dynamically regulates the learning signal near a target regime. A log-space analysis establishes conditions for stable tracking of the target learning-signal regime. Experiments on AlpacaEval~2, MT-Bench, and Anthropic-HH show that LSC-DPO consistently improves over DPO and strong preference-optimization baselines. We further find that different coefficient initializations induce distinct transient learning-signal
trajectories even when their later signal levels become similar. Based on this observation, we derive a signal-budget compensation rule that adjusts the target learning signal to compensate for these transient differences. The resulting compensation substantially reduces performance variation
across coefficient initializations.\footnote{Code is available at \url{https://github.com/handan-liu-lab/lsc-dpo}.}
\end{abstract}

\section{Introduction}
\label{Intro}
Large language models (LLMs) have achieved remarkable performance across a broad range of tasks. However, their outputs are not always aligned with human preferences. Reinforcement learning from human feedback (RLHF) \citep{christiano2017deep, ouyang2022training, stiennon2020learning} has become a widely used paradigm for improving alignment. In the standard RLHF pipeline, approaches such as Proximal Policy Optimization (PPO) \citep{schulman2017proximal} optimize a policy based on a learned reward model that captures human preferences, while a KL penalty constrains the policy from deviating excessively from a reference model. 

Recent work has explored reward-model-free approaches to reduce the computational cost of RLHF. Direct Preference Optimization (DPO)~\citep{rafailov2023direct} directly optimizes a policy from offline preference data without explicitly training a reward model. In DPO, preference learning is formulated through the policy-reference log-ratio difference between chosen and rejected responses. Under the Bradley--Terry model~\citep{bradley1952rank}, the scaled log-ratio difference can be interpreted as an implicit reward difference. The coefficient $\beta$ scales this quantity and thereby affects the resulting optimization dynamics.

A central question in DPO is how to characterize and regulate its evolving optimization state during training. The DPO objective contains a logistic loss term whose sensitivity decreases as optimization progresses~\citep{rafailov2023direct, rafailov2024scaling, gorbatovski2024learn}. As this sensitivity decreases, the same preference pairs receive progressively smaller gradient weight, weakening their contribution to subsequent policy updates. Importantly, the initial choice of the coefficient $\beta$ also affects the sensitivity of the logistic loss term and can therefore lead to different optimization trajectories and downstream performance~\citep{rafailov2023direct}. These observations motivate a principled mechanism for characterizing and regulating variations in the training state.

In this paper, we analyze the DPO objective from a loss-level geometric perspective and identify a sigmoid factor that characterizes the local sensitivity of the objective during training. This factor reflects both the evolving optimization state and the effect of the coefficient $\beta$, making it a useful quantity for diagnosing and regulating DPO training. Based on this observation, we propose \textbf{Learning-Signal-Controlled Direct Preference Optimization (LSC-DPO)}, which dynamically regulates the learning signal to keep it close to a target level during preference optimization. We further derive a local stability condition for tracking the desired learning-signal regime. Empirically, LSC-DPO achieves strong performance compared with existing preference-optimization methods \citep{rafailov2023direct, wu2024beta, lee2025kl, park2024disentangling, azar2024general, ethayarajh2024kto, xu2024contrastive, hong2024orpo, zhao2023slic, meng2024simpo} across standard alignment benchmarks and preference-optimization experiments. We further use the learning-signal view to analyze sensitivity to the initial choice of $\beta$ and derive a signal-budget compensation rule that reduces initialization-induced performance degradation and improves robustness across different initial coefficients.

Our contributions are summarized in three aspects. (1) We provide a loss-level geometric characterization of DPO, showing that the sigmoid factor serves as a learning signal for tracking the local sensitivity of the objective and the evolving training state. (2) We propose \textbf{LSC-DPO}, a learning-signal-controlled preference optimization method that dynamically adjusts the DPO coefficient according to the evolving training state. We further derive a local stability condition that characterizes stable tracking around the target learning-signal regime. Empirically, LSC-DPO achieves strong performance compared with existing preference-optimization baselines across standard alignment benchmarks. (3) We investigate the relationship between the initial coefficient and learning-signal dynamics, and derive a signal-budget compensation rule that reduces initialization-induced performance degradation and improves robustness across different initial coefficients.

\section{Preliminaries}
In RLHF, a policy is optimized against a reward function while being
regularized toward a reference policy~\citep{christiano2017deep, jaques2017sequence,  jaques2020human}:
\begin{equation}
	\max_{\pi_\theta}
	\;
	\mathbb{E}_{x\sim\mathcal{D},\,y\sim\pi_\theta(\cdot|x)}
	\left[r(x,y)\right]
	-
	\beta\,
	\mathbb{E}_{x\sim\mathcal{D}}
	\left[
	\mathbb{D}_{\mathrm{KL}}
	\left(
	\pi_\theta(\cdot|x)
	\,\|\, 
	\pi_{\mathrm{ref}}(\cdot|x)
	\right)
	\right],
\end{equation}
where $\pi_\theta$ denotes the policy and $\pi_{\mathrm{ref}}$ denotes the reference model. DPO~\citep{rafailov2023direct} uses the closed-form solution~\citep{ziegler2019fine,peng2019advantage,peters2007reinforcement,go2023aligning,rafailov2023direct} to reparameterize the reward function as:
\begin{equation}
	r(x,y)
	=
	\beta \log
	\frac{\pi_{\theta}(y\mid x)}{\pi_{\mathrm{ref}}(y\mid x)}
	+
	\beta \log Z(x),
\end{equation}
where $Z(x)$ is the partition function. Substituting the reparameterized reward into the Bradley--Terry model~\citep{bradley1952rank} yields the DPO objective:
\begin{equation}
	\begin{aligned}
		\mathcal{L}_{\mathrm{DPO}}(\pi_{\theta};\pi_{\mathrm{ref}})
		=
		-\mathbb{E}_{(x,y_{+},y_{-})}
		\Bigg[
		\log \sigma
		\Bigg(
		&\beta \log
		\frac{\pi_{\theta}(y_{+}\mid x)}
		{\pi_{\mathrm{ref}}(y_{+}\mid x)}
		-
		\beta \log
		\frac{\pi_{\theta}(y_{-}\mid x)}
		{\pi_{\mathrm{ref}}(y_{-}\mid x)}
		\Bigg)
		\Bigg],
	\end{aligned}
\end{equation}
where $y_+$ and $y_-$ denote the chosen and rejected responses, respectively. The DPO objective depends on the difference between the policy log-ratio of the chosen and rejected responses and the corresponding reference model log-ratio. We define this difference as the preference margin:
\begin{equation}
	m=\log\frac{\pi_{\theta}(y_{+}\mid x)}{\pi_{\theta}(y_{-}\mid x)}-\log\frac{\pi_{\mathrm{ref}}(y_{+}\mid x)}{\pi_{\mathrm{ref}}(y_{-}\mid x)} .
\end{equation}
This margin measures how much more the policy favors the chosen response over the rejected response, relative to the reference model.  We further define the scaled preference margin as
\begin{equation}
	z = \beta m .
	\label{eq:scaled_margin}
\end{equation}
Using the margin formulation above, the DPO objective for a single preference pair can be written compactly as
\begin{equation}
	\ell_{\mathrm{DPO}}
	=
	-\log\sigma(\beta m)
	=
	-\log\sigma(z).
	\label{eq:dpo_scaled_loss}
\end{equation}
Thus, the optimization behavior of DPO depends on the scaled margin $z=\beta m$, which couples the evolving preference margin with the coefficient $\beta$.

\section{Learning-Signal-Controlled Direct Preference Optimization}
In this section, we introduce Learning-Signal-Controlled Direct Preference Optimization (LSC-DPO). We first identify the sigmoid factor $\sigma(-z)$ in the DPO objective as a learning signal that reflects the local sensitivity of the DPO loss during training and illustrate how different initial choices of $\beta$ affect the evolution of this signal. We then use this signal to characterize the training state of DPO and derive an update rule that regulates the signal during preference optimization. We analyze the stability of the resulting dynamics in log-space.

\subsection{Learning Signal in DPO}
\label{sec:effective_signal}
To understand the training behavior of DPO, we analyze the local structure of the DPO objective in Eq.~\ref{eq:dpo_scaled_loss}. The gradient can be
written as
\begin{equation}
	\nabla_\theta \ell
	=
	-\sigma(-z_\theta)\nabla_\theta z_\theta .
	\label{eq:dpo_gradient_z}
\end{equation}
Taking another derivative gives the Hessian of the DPO objective:
\begin{equation}
	\nabla_\theta^2 \ell
	=
	\sigma(z_\theta)\sigma(-z_\theta)
	\nabla_\theta z_\theta\nabla_\theta z_\theta^\top
	-
	\sigma(-z_\theta)\nabla_\theta^2 z_\theta .
	\label{eq:dpo_hessian_z}
\end{equation}
\begin{wrapfigure}{r}{0.48\textwidth}
	\centering
	\vspace{-0.5em}
	\includegraphics[width=0.48\textwidth]{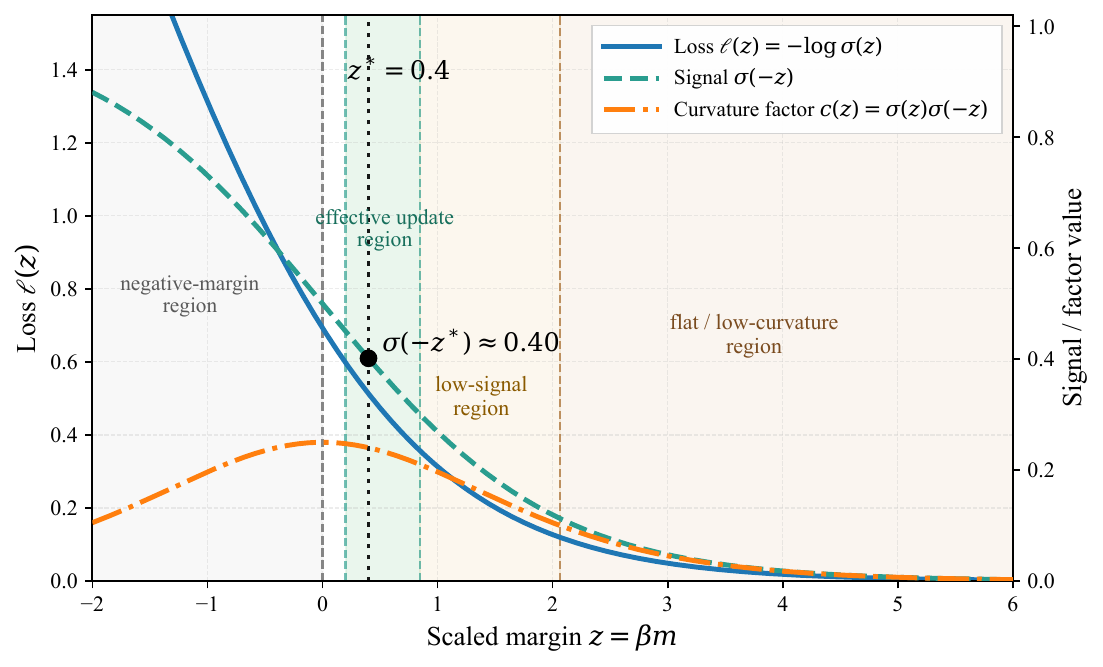}
	\caption{
		Illustration of the DPO loss geometry as a function of the scaled margin
		$z=\beta m$. The learning signal $\sigma(-z)$ and curvature factor
		$c(z)=\sigma(z)\sigma(-z)$ decrease as $z$ grows.
	}
	\label{fig:signal_geometry}
	\vspace{-1.0em}
\end{wrapfigure}
Equations~\ref{eq:dpo_gradient_z} and~\ref{eq:dpo_hessian_z} show that $\sigma(-z_\theta)$ plays a central role in the training dynamics of DPO. From the gradient perspective, $\sigma(-z_\theta)$ directly measures the first-order sensitivity of the DPO loss. From the curvature perspective, the Hessian contains two scaled terms: the outer-product term is weighted by $\sigma(z_\theta)\sigma(-z_\theta)$, while the second-order term is weighted by $\sigma(-z_\theta)$. During training, $z_\theta$ typically increases for preference pairs that become better aligned with the preference ordering. As $z_\theta$ grows, the sigmoid factor $\sigma(-z_\theta)$ decreases. Since these factors scale the Hessian terms that determine local curvature, their decrease indicates reduced local sensitivity of the objective and a weaker preference-learning tendency. Figure~\ref{fig:signal_geometry} illustrates this loss geometry.

We therefore use $\sigma(-z_\theta)$ as a learning signal for DPO training. When this signal is relatively large, the DPO loss remains more sensitive to changes in the scaled margin, corresponding to a high-signal regime. As $z_\theta$ increases, $\sigma(-z_\theta)$ decreases and the loss becomes progressively less sensitive to further changes in the scaled margin. We refer to this as a low-signal regime, which progressively approaches a lower-curvature region as $z_\theta$ grows further. Figures~\ref{fig:qt_fixed_beta_harmless} and~\ref{fig:qt_fixed_beta_helpful} further show that $\sigma(-z_\theta)$ evolves systematically over training. It remains relatively high during the early stage and gradually decreases before stabilizing at a lower level. This evolution is consistent with our interpretation of $\sigma(-z_\theta)$ as reflecting the local responsiveness of the DPO loss during training. The figures further show that different choices of the DPO coefficient lead to markedly different learning-signal trajectories $\sigma(-z)$. This observation suggests that the learning-signal dynamics are related to the coefficient setting and motivates our later analysis of coefficient initialization in Section~\ref{sec:init_beta}.

\begin{figure}
	\centering
	
	\begin{subfigure}[t]{0.48\linewidth}
		\centering
		\includegraphics[width=0.9\linewidth]{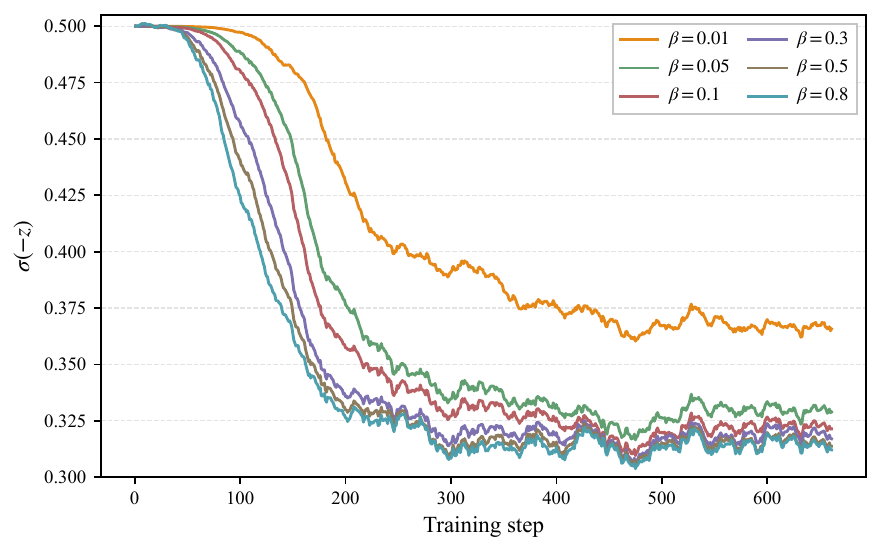}
		\caption{HH-harmless-base}
		\label{fig:qt_fixed_beta_harmless}
	\end{subfigure}
	\hfill
	\begin{subfigure}[t]{0.48\linewidth}
		\centering
		\includegraphics[width=0.9\linewidth]{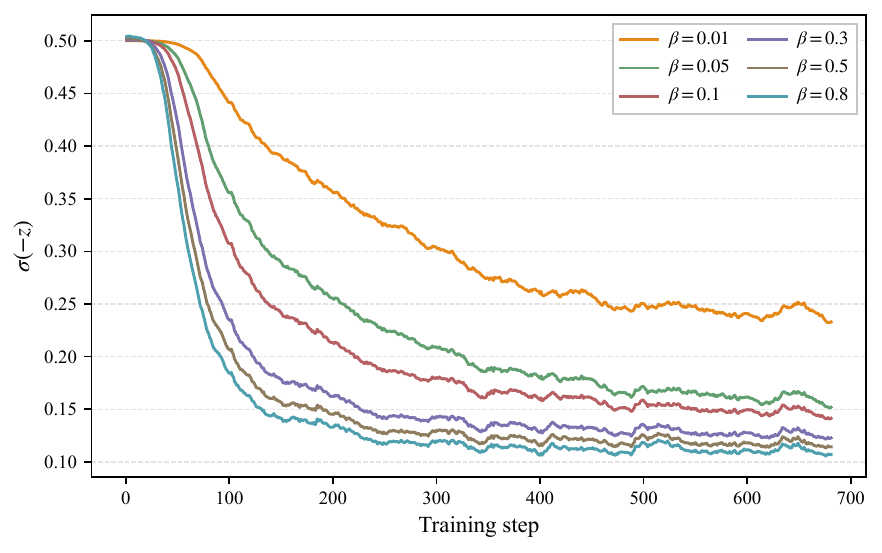}
		\caption{HH-helpful-base}
		\label{fig:qt_fixed_beta_helpful}
	\end{subfigure}
	
	\caption{
		$\sigma(-z)$ under fixed $\beta$ values during DPO training with Llama3-8B on two Anthropic-HH subsets: (a) harmless-base and (b) helpful-base. EMA-smoothed trajectories are shown for readability; raw trajectories are provided in Appendix~\ref{dpo_qt}.
	}
	\label{fig:qt_fixed_beta}
\end{figure}

\subsection{Learning-Signal-Controlled Update}
\label{sec:lsc_update}
Since $\sigma(-z)$ reflects the learning signal in DPO, we aim to keep it near a target level rather than allowing it to decay rapidly during training. Because $\sigma(-z)$ decreases monotonically with the scaled margin $z$, regulating $z$ provides a direct way to control the signal. Since $z=\beta m$, we use $\beta$ as the control variable.

At training step $k$, we compute the per-example preference margins on the current batch and use their mean as the batch-level margin $m_k$. We then obtain the corresponding scaled margin:
\begin{equation}
	z_k = \beta_k m_k .
\end{equation}
Thus, the controller computes the signal from the scaled batch-mean margin $z_k$. We then measure its deviation from a target scaled margin $z^*$:
\begin{equation}
	u_k = z_k - z^* .
\end{equation}
The target $z^*$ determines the learning-signal regime that LSC-DPO tracks,
equivalently corresponding to the target signal level $\sigma(-z^*)$. LSC-DPO updates $\beta$ using the following multiplicative rule:
\begin{equation}
	\label{eq:beta-update-rule}
	\beta_{k+1} = \beta_k \exp(-\eta u_k) = \beta_k \exp\left[-\eta(\beta_k m_k-z^*)\right],
\end{equation}
where $\eta>0$ controls the adjustment rate. The multiplicative form preserves the positivity of $\beta$ and implements feedback control over the scaled
margin. In the positive-margin regime, when $z_k>z^*$, $\sigma(-z_k)$ is below the target level and the update decreases $\beta_k$, moving the scaled margin downward. Conversely, when $z_k<z^*$, the update increases $\beta_k$, moving the scaled margin upward toward the target. The complete procedure is provided in Algorithm~\ref{alg:lsc_dpo} in Appendix~\ref{app:algorithm}.

\subsection{Stability Analysis of the Update Process}
\label{sec:stability_analysis}
We analyze the LSC-DPO update in log-space to characterize controller-level
tracking of the target $z^*$, rather than convergence of the full policy
optimization problem. Starting from Eq.~\ref{eq:beta-update-rule}, let
\begin{equation}
x_k := \log \beta_k.
\label{eq:xk_def}
\end{equation}

Substituting $\beta_k=e^{x_k}$ into Eq.~\ref{eq:beta-update-rule} gives
\begin{equation}
x_{k+1} = x_k-\eta\left(e^{x_k}m_k-z^*\right).
\end{equation}

Since the margin statistic $m_k$ evolves during training, the induced system is non-autonomous. Here, $m_k$ denotes the realized batch-level margin at step $k$ and may depend on the current policy, minibatch sampling, and previous coefficient updates. Accordingly, our analysis is conditional on the realized margin sequence and characterizes controller-level tracking rather than joint convergence of the full policy-training dynamics. Conditioned on $m_k$, the step-dependent update map is
\begin{equation}
F_k(x) = x-\eta\left(e^x m_k-z^*\right).
\label{eq:Fk_def}
\end{equation}

\paragraph{Instantaneous equilibrium.}
For a realized $m_k$, the map $F_k$ admits an instantaneous fixed point
\begin{equation}
x_k^f = \log\frac{z^*}{m_k}, \qquad \beta_k^f = \frac{z^*}{m_k},
\label{eq:instantaneous_equilibrium}
\end{equation}
provided that $m_k>0$ and $z^*>0$. Because $m_k$ evolves during training, this equilibrium moves over time. We next characterize its local stability.

\paragraph{Local stability.}
Linearizing the step-dependent map $F_k$ around the instantaneous
equilibrium $x_k^f$ gives
\begin{equation}
	F_k'(x_k^f)=1-\eta z^*.
\end{equation}
Hence, the instantaneous equilibrium is locally asymptotically stable whenever
\begin{equation}
	0<\eta z^*<2.
	\label{eq:stability_condition_main}
\end{equation}
Notably, this condition is independent of the realized margin $m_k$. The formal proposition and proof are provided in Appendix~\ref{prof:local_stability}.

\paragraph{Bounded tracking.}
Because $m_k$ evolves during training, the corresponding equilibrium
$x_k^f$ also moves over time. Under local contraction and bounded
equilibrium drift, the resulting tracking error satisfies
\begin{equation}
	|x_k-x_k^f|
	\le
	\rho^{k-T}|x_T-x_T^f|
	+
	\frac{\bar{\delta}}{1-\rho},
	\qquad k\ge T,
	\label{eq:ultimate_bound_main}
\end{equation}
where $\rho\in(0,1)$ denotes a local contraction factor and $\bar{\delta}$ bounds the step-to-step motion of the instantaneous equilibrium. The first term decays geometrically, while the second captures the persistent error induced by the moving target. Thus, when the realized margin evolves sufficiently gradually, the controller
remains close to the target scaled-margin regime and keeps the learning signal near $\sigma(-z^*)$. The complete theorem, assumptions, and proof are provided in Appendix~\ref{proof:theorem}.

\section{Experiments}
In this section, we evaluate LSC-DPO on standard alignment benchmarks and analyze its training dynamics. We first compare LSC-DPO with representative preference-optimization baselines on AlpacaEval 2~\citep{dubois2024length} and MT-Bench~\citep{zheng2023judging}, and then study its performance and learning-signal dynamics on Anthropic-HH~\citep{bai2022training}. We further validate the proposed learning-signal control through optimization-dynamics analyses and comparisons with predetermined $\beta$ schedules. Finally, we investigate robustness to the initial DPO coefficient and the proposed signal-budget compensation rule. Additional analyses of target-signal selection and hyperparameters are provided in Appendix~\ref{ablation_analysis}, control stability in Appendix~\ref{stability_analysis}, and downstream capability evaluations in Appendix~\ref{downstream}.

\subsection{Experimental Setup}
\label{sec:exp_setup}

\paragraph{Models and training datasets.}
\label{sec:models_training_setup}
We evaluate on two open-source models, meta-llama/Meta-Llama-3-8B (Llama3-8B)~\citep{grattafiori2024llama} and Qwen/Qwen3-8B-Base (Qwen3-8B)~\citep{yang2025qwen3}. We first perform supervised fine-tuning on UltraChat-200k~\citep{ding2023enhancing}, and then perform preference optimization on UltraFeedback~\citep{cui2023ultrafeedback}. We additionally use the harmless-base and helpful-base subsets of Anthropic-HH~\citep{bai2022training} for controlled analyses. Full training settings are provided in Appendix~\ref{app:models_training_datasets}.

\paragraph{Evaluation benchmarks and baselines.}
\label{eval_benchmarks}
For the UltraFeedback experiments, we evaluate on AlpacaEval~2~\citep{dubois2024length} using win rate and length-controlled (LC) win rate, and on MT-Bench~\citep{zheng2023judging}. On Anthropic-HH, we report pairwise win rates on both harmless-base and helpful-base. We compare our method with representative preference-optimization baselines, including DPO~\citep{rafailov2023direct}, $\beta$-DPO~\citep{wu2024beta}, $\varepsilon$-DPO~\citep{lee2025kl}, R-DPO~\citep{park2024disentangling}, IPO~\citep{azar2024general}, KTO~\citep{ethayarajh2024kto}, CPO~\citep{xu2024contrastive}, ORPO~\citep{hong2024orpo}, SLiC-HF~\citep{zhao2023slic}, and SimPO~\citep{meng2024simpo}. For the controlled Anthropic-HH analyses, we focus on DPO, $\beta$-DPO, and $\varepsilon$-DPO as the most relevant coefficient-control baselines. Detailed evaluation protocols and baseline configurations are provided in Appendix~\ref{app:evaluation_benchmarks_baselines}.

\subsection{Main Results on Standard Alignment Benchmarks}
\begin{table}
	\centering
	\small
	\setlength{\tabcolsep}{4.5pt}
	\caption{
		Evaluation results on AlpacaEval 2 and MT-Bench.
		For AlpacaEval 2, we report both win rate (WR) and length-controlled win rate (LC). For MT-Bench, we report the average score on a 1--10 scale.
	}
	\label{tab:main_eval}
	\begin{tabular}{lcccccc}
		\toprule
		\multirow{3}{*}{\textbf{Method}}
		& \multicolumn{3}{c}{\textbf{Llama3-8B}}
		& \multicolumn{3}{c}{\textbf{Qwen3-8B}} \\
		\cmidrule(lr){2-4} \cmidrule(lr){5-7}
		& \multicolumn{2}{c}{\textbf{AlpacaEval 2}}
		& \textbf{MT-Bench}
		& \multicolumn{2}{c}{\textbf{AlpacaEval 2}}
		& \textbf{MT-Bench} \\
		\cmidrule(lr){2-3} \cmidrule(lr){4-4}
		\cmidrule(lr){5-6} \cmidrule(lr){7-7}
		& \textbf{LC (\%)} & \textbf{WR (\%)} & \textbf{Score}
		& \textbf{LC (\%)} & \textbf{WR (\%)} & \textbf{Score} \\
		\midrule
		SFT
		& 6.8  & 3.7  & 6.3
		& 8.5  & 4.8  & 6.9 \\
		
		DPO
		& 19.2 & 17.7 & 6.6
		& 17.2 & 13.1 & 7.6 \\
		
		$\varepsilon$-DPO
		& 19.6 & 17.6 & \textbf{6.9}
		& 19.4 & 14.0 & 7.6 \\
		
		$\beta$-DPO
		& 16.6 & 16.5 & 6.8
		& 16.4 & 10.3 & 7.2 \\
		
		CPO
		& 11.2 & 10.3 & 6.4
		& 16.5 & 14.5 & 7.6 \\
		
		IPO
		& 8.6  & 8.5  & 6.6
		& 14.9 & 9.2  & 7.4 \\
		
		KTO
		& 16.9 & 15.1 & 6.8
		& 17.2 & 11.3 & 7.4 \\
		
		ORPO
		& 6.4  & 5.3  & 5.5
		& 9.4  & 6.8  & 6.9 \\
		
		R-DPO
		& 19.5 & 18.2 & 6.8
		& 19.3 & 13.2 & 7.6 \\
		
		SLiC-HF
		& 10.1 & 8.7  & 5.9
		& 12.5 & 10.5 & 7.1 \\
		
		SimPO
		& 19.7 & 19.2 & \textbf{6.9}
		& 16.8 & 10.9 & 7.4 \\
		
		\midrule
		\textbf{LSC-DPO}
		& \textbf{22.7} & \textbf{21.1} & \textbf{6.9}
		& \textbf{22.1} & \textbf{16.1} & \textbf{7.8} \\
		\bottomrule
	\end{tabular}
\end{table}
\paragraph{LSC-DPO achieves the strongest overall performance.}
Table~\ref{tab:main_eval} reports that LSC-DPO achieves the best AlpacaEval 2 performance on both Llama3-8B and Qwen3-8B, while attaining top or tied-top MT-Bench scores. Compared with DPO, LSC-DPO improves the length-controlled win rate from 19.2 to 22.7 on Llama3-8B and from 17.2 to 22.1 on Qwen3-8B, with similar gains in standard win rate. It also consistently outperforms $\beta$-DPO and $\varepsilon$-DPO and remains competitive with broader preference-optimization baselines, including R-DPO, KTO, CPO, ORPO, SLiC-HF, IPO, and SimPO. We next use Anthropic-HH for controlled comparisons of coefficient-regulation methods and their learning-signal behavior.

\begin{figure}
	\centering
	\begin{subfigure}{0.48\textwidth}
		\centering
		\includegraphics[width=0.8\linewidth]{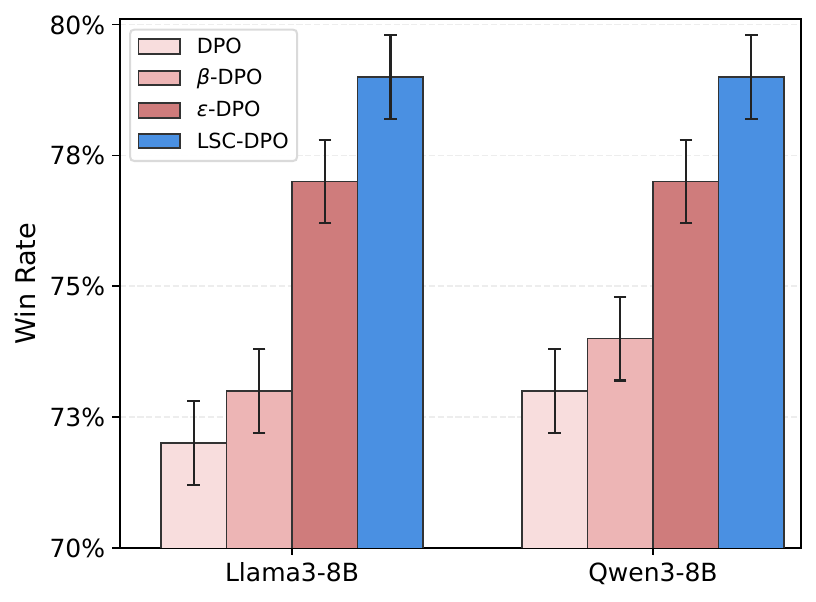}
		\caption{HH-harmless-base}
		\label{fig:performance_harmless}
	\end{subfigure}
	\hfill
	\begin{subfigure}{0.48\textwidth}
		\centering
		\includegraphics[width=0.8\linewidth]{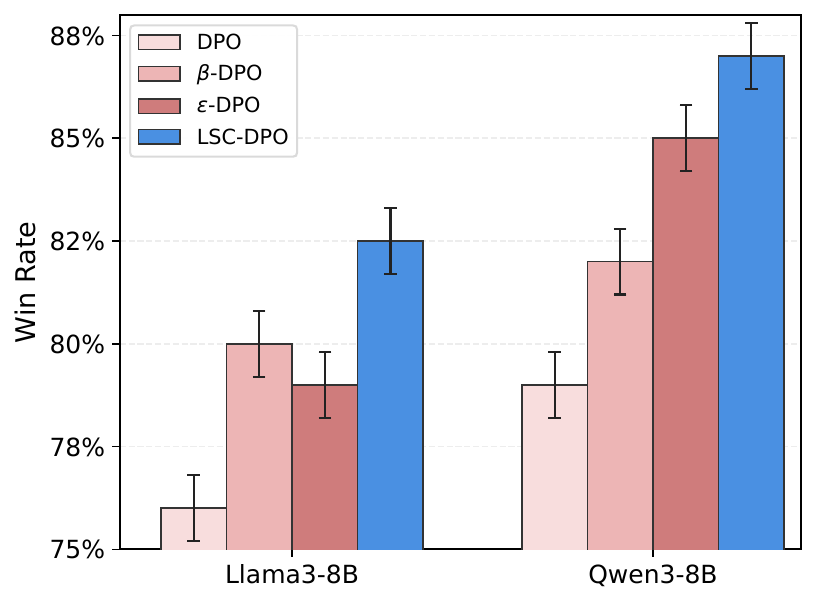}
		\caption{HH-helpful-base}
		\label{fig:performance_helpful}
	\end{subfigure}
	
	\caption{
		Performance comparison on Anthropic-HH. We report win rates of DPO, $\beta$-DPO, $\varepsilon$-DPO, and LSC-DPO using Llama3-8B and Qwen3-8B. Error bars denote approximate 95\% confidence intervals computed from prompt-level GPT-4 judgment results.
	}
	\label{fig:hh_performance}
\end{figure}

\begin{figure}
	\centering
	
	\begin{subfigure}{0.48\textwidth}
		\centering
		\includegraphics[width=0.9\linewidth]{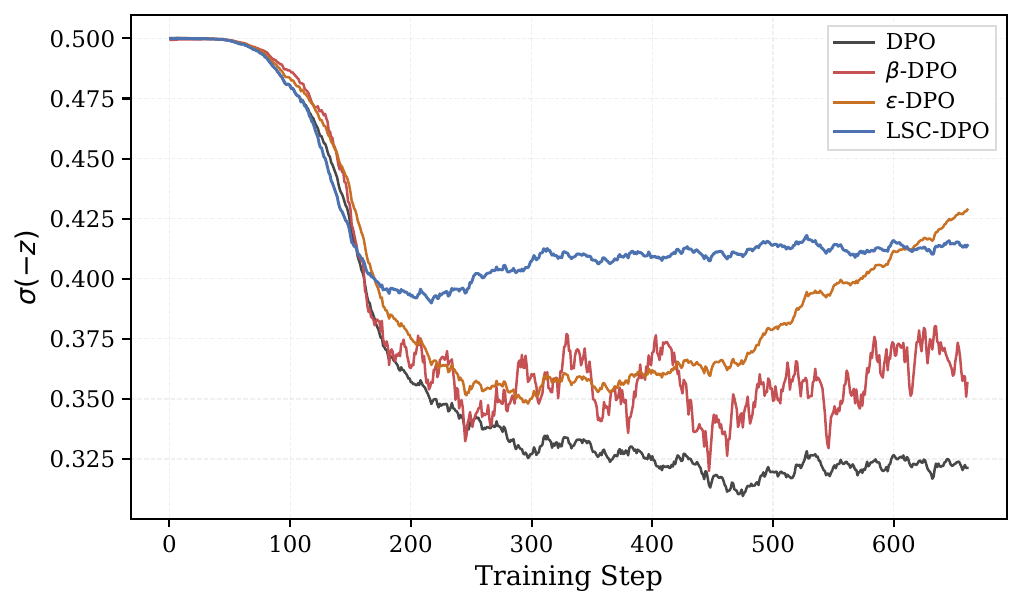}
		\caption{harmless-base}
		\label{fig:qt_harmless}
	\end{subfigure}
	\hfill
	\begin{subfigure}{0.48\textwidth}
		\centering
		\includegraphics[width=0.9\linewidth]{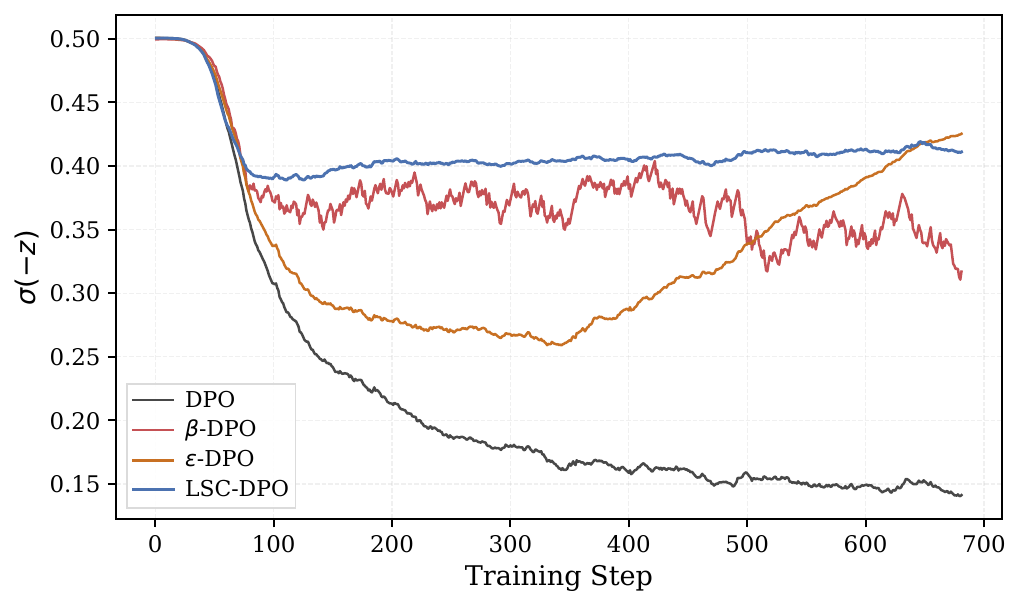}
		\caption{helpful-base}
		\label{fig:qt_helpful}
	\end{subfigure}
	
	\caption{
		Learning signal $\sigma(-z)$ on Anthropic-HH using Llama3-8B. The corresponding raw curves and Qwen results are provided in Appendix~\ref{app:additional_dynamics}.
	}
	\label{fig:hh_qt_llama}
\end{figure}
\paragraph{Overall performance on Anthropic-HH.}
Figures~\ref{fig:performance_harmless} and~\ref{fig:performance_helpful} show that LSC-DPO achieves the best pairwise win rates across both model families and Anthropic-HH subsets, reaching approximately 79\% on harmless-base and 82--87\% on helpful-base. It consistently outperforms DPO, $\beta$-DPO, and $\varepsilon$-DPO, further supporting the effectiveness of learning-signal control under controlled preference-optimization settings.

\paragraph{Learning-signal dynamics.}
\label{learning_signal_analysis}
Figures~\ref{fig:qt_harmless} and~\ref{fig:qt_helpful} show that DPO exhibits a sustained signal decay and eventually moves toward a low-signal regime, $\beta$-DPO shows larger fluctuations, and $\varepsilon$-DPO partially recovers after an initial decline. In contrast, LSC-DPO maintains a higher and more stable signal near its target regime across most of training. These relative dynamics are consistent with the performance trends in Figure~\ref{fig:hh_performance}; on helpful-base, for instance, $\varepsilon$-DPO maintains a lower signal than $\beta$-DPO for much of training before recovering later, aligning with its lower win rate. Overall, these results support the intended learning-signal regulation behavior of LSC-DPO.

\subsection{Mechanistic Validation of Learning-Signal Control}
\label{sec:mechanistic_validation}
\begin{table}[t]
	\caption{
		Optimization diagnostics before and after LSC-DPO control activation.
		We report the mean curvature factor and mean pre-clipping gradient norm.
	}
	\label{tab:mechanistic_diagnostics}
	\begin{center}
		\begin{tabular*}{\textwidth}{@{\extracolsep{\fill}}lcccc@{}}
			\toprule
			& \multicolumn{2}{c}{Before control (steps 1--83)}
			& \multicolumn{2}{c}{After control (steps 84--477)} \\
			\cmidrule(lr){2-3}\cmidrule(lr){4-5}
			Method
			& Curvature
			& Grad. norm
			& Curvature
			& Grad. norm \\
			\midrule
			DPO
			& 0.24993
			& 29.861
			& 0.23724
			& 90.330 \\
			
			\textbf{LSC-DPO}
			& 0.24993
			& 29.896
			& \textbf{0.24371}
			& 73.373 \\
			\bottomrule
		\end{tabular*}
	\end{center}
\end{table}
\begin{table}[t]
	\caption{Comparison with predetermined $\beta$ schedules.}
	\label{tab:beta_schedule_comparison}
	\centering
	\small
	\renewcommand{\arraystretch}{0.92}
	\begin{tabular*}{0.68\textwidth}
		{@{\extracolsep{\fill}}lcc@{}}
		\toprule
		Method & LC (\%) & WR (\%) \\
		\midrule
		DPO                          & 19.2 & 17.7 \\
		Linear $\beta$ schedule      & 19.7 & 18.4 \\
		Exponential $\beta$ schedule & 20.3 & 18.5 \\
		Cosine $\beta$ schedule      & 19.1 & 18.1 \\
		\textbf{LSC-DPO}             & \textbf{22.7} & \textbf{21.1} \\
		\bottomrule
	\end{tabular*}
\end{table}
\paragraph{Learning signal and realized optimization dynamics.}
Table~\ref{tab:mechanistic_diagnostics} shows that DPO and LSC-DPO are nearly indistinguishable before control activation: both have the same mean curvature factor and almost identical pre-clipping gradient norms. After control activation, LSC-DPO maintains a larger curvature factor than DPO. The slower reduction of this quantity indicates that LSC-DPO keeps the objective farther from the flatter, low-curvature regime. At the same time, its pre-clipping gradient norm is lower, showing that this increased loss-level responsiveness is not obtained by simply amplifying the overall parameter-gradient magnitude. Together, these results support our
interpretation of $\sigma(-z)$ as a meaningful loss-level training signal. Full training trajectories are provided in Appendix~\ref{app:optimization_dynamics_loss}.

\paragraph{Feedback versus predetermined $\beta$ schedules.}
Table~\ref{tab:beta_schedule_comparison} compares LSC-DPO with several predetermined schedules. Linear, exponential, and cosine schedules yield only modest changes relative to DPO, whereas LSC-DPO achieves substantially higher LC and WR. This suggests that the improvement cannot be explained solely by using a time-varying $\beta$ schedule and supports the role of state-dependent learning-signal feedback in LSC-DPO.

\subsection{Initialization Robustness and Signal Compensation}
\label{sec:init_beta}
\begin{figure}[t]
	\centering
	\includegraphics[width=0.6\linewidth]{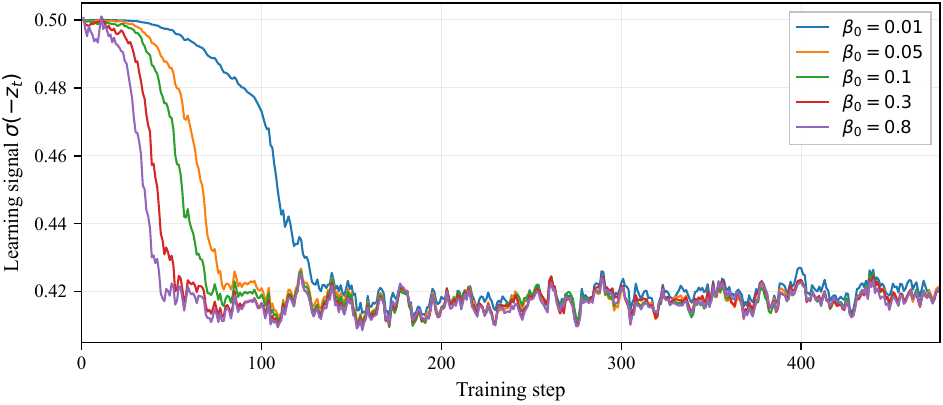}
	\caption{
		Learning-signal trajectories under different initial $\beta_0$.
		Curves are EMA-smoothed.
	}
	\label{fig:init_beta_signal}
\end{figure}
\paragraph{Initialization sensitivity.}
Table~\ref{tab:init_beta_performance} shows substantial sensitivity to $\beta_0$ across all methods. LSC-DPO achieves the highest performance at every tested initialization and degrades less severely, but still exhibits a 7.5-point LC spread. Figure~\ref{fig:init_beta_signal} further shows how different $\beta_0$ values affect the learning-signal trajectory $\sigma(-z)$. Although LSC-DPO eventually regulates the trajectories toward a similar controlled level, larger $\beta_0$ causes a faster decay of $\sigma(-z)$ during the early stage of training.

\begin{table}[t]
	\caption{
		Initialization sensitivity on AlpacaEval 2.
		We report length-controlled (LC) win rate (\%).
		$\dagger$ denotes the reference setting. Full LC/WR results are provided in Appendix~\ref{app:init_full_results}.
	}
	\label{tab:init_beta_performance}
	\centering
	\small
	\setlength{\tabcolsep}{4.5pt}
	\renewcommand{\arraystretch}{0.90}
	\setlength{\heavyrulewidth}{0.9pt}
	
	\begin{tabular*}{0.82\textwidth}
		{@{\extracolsep{\fill}}lccccc@{}}
		\toprule
		& \multicolumn{5}{c}{Initial coefficient $\beta_0$} \\
		\cmidrule(lr){2-6}
		Method
		& 0.01$^\dagger$
		& 0.05
		& 0.10
		& 0.30
		& 0.80 \\
		\midrule
		DPO
		& 19.2 & 10.4 & 8.7 & 8.3 & 6.6 \\
		
		$\beta$-DPO
		& 16.6 & 10.5 & 10.8 & 7.3 & 6.9 \\
		
		$\varepsilon$-DPO
		& 19.6 & 15.6 & 12.9 & 8.8 & 6.9 \\
		
		\textbf{LSC-DPO}
		& \textbf{22.7}
		& \textbf{19.6}
		& \textbf{19.1}
		& \textbf{15.3}
		& \textbf{15.2} \\
		\bottomrule
	\end{tabular*}
\end{table}

\paragraph{Signal-budget compensation.}
We hypothesize that the residual performance variation across different initial coefficients is related to differences in the cumulative learning signal during the transient stage. For an arbitrary anchor run $a$, we define the cumulative signal deficit as
\begin{equation}
	D_H(\beta_0)
	=
	\sum_{t=1}^{H}
	\left[
	\sigma(-z_t^a)
	-
	\sigma(-z_t^{(\beta_0)})
	\right],
	\label{eq:signal_deficit}
\end{equation}
where $H$ denotes the end of the transient phase. A positive $D_H(\beta_0)$ indicates that the run accumulates a smaller learning signal than the anchor during this stage. We compensate for this deficit over the remaining $T-H$ steps by setting
\begin{equation}
	\sigma(-z_{\mathrm{comp}}^*)
	=
	\sigma(-z_a^*)
	+
	\frac{D_H(\beta_0)}{T-H},
	\qquad
	z_{\mathrm{comp}}^*
	=
	\log
	\frac{1-\sigma(-z_{\mathrm{comp}}^*)}
	{\sigma(-z_{\mathrm{comp}}^*)},
	\label{eq:signal_compensation}
\end{equation}
where $z_{\mathrm{comp}}^*$ denotes the compensated target scaled margin corresponding to the compensated learning-signal target. Thus, a larger early-stage deficit leads to a higher compensated learning-signal target and, equivalently, a smaller $z^*$. Under ideal post-transient tracking, this rule matches the cumulative learning signal of the anchor run. We evaluate below whether this compensation also reduces the corresponding performance gap.

\begin{table}[t]
	\caption{
		Effect of full signal-budget compensation on AlpacaEval~2.
		We report length-controlled (LC) win rate (\%).
	}
	\label{tab:full_compensation}
	\centering
	\small
	\setlength{\tabcolsep}{5pt}
	\renewcommand{\arraystretch}{0.92}
	\setlength{\heavyrulewidth}{0.9pt}
	
	\begin{tabular*}{0.86\textwidth}
		{@{\extracolsep{\fill}}ccccc@{}}
		\toprule
		& \multicolumn{2}{c}{Baseline}
		& \multicolumn{2}{c}{Full compensation} \\
		\cmidrule(lr){2-3}
		\cmidrule(lr){4-5}
		
		$\beta_0$
		& $z^*$
		& LC
		& $z_{\mathrm{comp}}^*$
		& LC $\;(\Delta)$ \\
		\midrule
		
		0.05
		& 0.400
		& 19.6
		& 0.359
		& \textbf{21.5} \; (+1.9) \\
		
		0.10
		& 0.400
		& 19.1
		& 0.348
		& \textbf{21.1} \; (+2.0) \\
		
		0.30
		& 0.400
		& 15.3
		& 0.333
		& \textbf{20.4} \; (+5.1) \\
		
		0.80
		& 0.400
		& 15.2
		& 0.324
		& \textbf{19.7} \; (+4.5) \\
		
		\bottomrule
	\end{tabular*}
\end{table}
\paragraph{Compensation results.}
Table~\ref{tab:full_compensation} shows that signal-budget compensation improves the LC win rate for all tested non-anchor initializations, with gains of up to $5.1$ points, while reducing the LC spread from $4.4$ to $1.8$ points. These results indicate that adjusting the learning signal can partially compensate for the performance differences induced by coefficient initialization. Overall, the compensation substantially reduces, but does not eliminate, initialization sensitivity. Full results and additional analyses are provided in Appendix~\ref{app:signal_compensation_details}.

\section{Related Work}
\paragraph{Reinforcement learning from human feedback.}
Reinforcement learning from human feedback (RLHF) has become a central approach for aligning large language models with human preferences and values \citep{bai2022training, christiano2017deep, ouyang2022training, stiennon2020learning, ziegler2019fine}. A typical RLHF pipeline first collects preference comparisons, trains a reward model from human feedback \citep{chen2024odin, gao2023scaling, havrilla2024glore, luo2023wizardmath}, and then optimizes the policy with a reinforcement learning algorithm \citep{anthony2017thinking, schulman2017proximal}. Proximal policy optimization (PPO) is commonly used for this policy optimization stage \citep{schulman2017proximal}. Preference-based alignment has also been applied to improve helpfulness and safety, reduce toxic outputs, improve factuality, and control multiple objectives~\citep{dai2023safe, korbak2023pretraining, zheng2023click, tian2023fine, wang2024arithmetic}.

\paragraph{Direct preference optimization and variants.}
DPO~\citep{rafailov2023direct} has motivated a broad family of direct alignment methods that optimize language models from offline preference data without training an explicit reward model. Representative extensions include IPO~\citep{azar2024general}, KTO~\citep{ethayarajh2024kto}, RRHF~\citep{yuan2023rrhf}, R-DPO~\citep{park2024disentangling}, ODPO~\citep{amini2024direct}, CPO~\citep{xu2024contrastive}, ORPO~\citep{hong2024orpo}, SLiC-HF~\citep{zhao2023slic}, and SimPO~\citep{meng2024simpo}. These methods extend preference optimization through different objective designs, reference-model formulations, and robustness mechanisms.

\paragraph{DPO training dynamics and coefficient control.}
Recent work has studied DPO optimization from training-dynamics and
coefficient-control perspectives. Direct alignment methods can also exhibit overoptimization behavior as the policy moves away from the reference model~\citep{rafailov2024scaling}. A related line of work studies this training behavior and mitigates it by updating the reference policy during training~\citep{gorbatovski2024learn}. In the coefficient-control direction, $\beta$-DPO \citep{wu2024beta} updates $\beta$ using empirical reward-margin statistics, while $\varepsilon$-DPO~\citep{lee2025kl} perturbs the coefficient to reduce sensitivity to a fixed $\beta$. Related margin-adaptive approaches include AlphaDPO~\citep{wu2024alphadpo} and $\gamma$-PO~\citep{sun2025robust}.

\section{Conclusion}
\label{conclusion}
In this paper, we introduced Learning-Signal-Controlled Direct Preference Optimization (LSC-DPO), a feedback-control approach that regulates the learning signal of DPO during preference optimization.  By analyzing the DPO objective from a loss-level geometric perspective, we identified the sigmoid factor as a learning signal that characterizes the local sensitivity of the objective and used it to design a control rule that regulates DPO training dynamics around a target learning-signal level. Our log-space analysis shows that, under a simple local stability condition, the feedback update can stably track the target learning-signal regime. Empirical results further show that LSC-DPO maintains more sustained learning-signal dynamics and achieves stronger alignment performance. We further showed that different coefficient initializations are related to learning-signal dynamics and derived a signal-budget compensation rule that substantially reduces the resulting performance variation.

\paragraph{Limitations and Future Work.}
Although the best-performing learning-signal levels are similar across the model
families studied in this work, we do not yet provide a complete theory explaining why these target ranges emerge or when they transfer across models and datasets. In addition, although signal-budget compensation substantially reduces initialization-induced performance variation, it does not completely eliminate the remaining performance gap. Future work may develop a more complete theory of target learning-signal selection and signal-budget dynamics, and extend learning-signal control beyond DPO to other preference-optimization objectives.

\subsection*{AI use statement}
Generative AI tools were used in this work to edit the manuscript for grammar and readability, to check the formatting and the bibliography against the original sources, and to assist with writing and debugging the training and evaluation scripts. Generative AI tools were not used to generate synthetic data sets, to develop the theoretical framework, or to formulate or prove the mathematical claims; the remaining tasks with required disclosure are not applicable to this work. All AI-assisted text was reviewed and revised by the authors, AI-assisted code was verified and tested by the authors, and every citation was checked against the original publication. We take responsibility for the final content of this work, including text, claims, and artifacts produced with the aid of generative AI.

\subsection*{Ethics statement}
This work studies preference optimization for language-model alignment. All experiments use publicly available datasets (UltraChat-200k, UltraFeedback, and Anthropic-HH) and open-weight base models under their respective licenses, and no human-subject data were collected. The Anthropic-HH harmless-base subset contains prompts designed to elicit harmful responses; we use it only to evaluate harmlessness. As with any preference-optimization method, LSC-DPO can in principle be used to align a model toward undesirable preferences; our method does not change this risk relative to DPO. We do not release any new dataset.

\subsection*{Reproducibility statement}
The complete LSC-DPO procedure is given in Algorithm~\ref{alg:lsc_dpo} (Appendix~\ref{app:algorithm}). Models, datasets, hyperparameters, and compute are listed in Appendix~\ref{app:models_training_datasets}, and the evaluation protocols, judge models, and decoding settings in Appendix~\ref{app:evaluation_benchmarks_baselines}. The assumptions and full proofs of the stability results are in Appendices~\ref{appendix:proof} and~\ref{appendix:drift_condition}. The predetermined $\beta$ schedules and the signal-budget compensation targets are specified in Appendices~\ref{app:beta_schedule_comparison} and~\ref{app:signal_compensation_details}.

\bibliography{lsc_dpo_arxiv}

@article{ouyang2022training,
  title={Training language models to follow instructions with human feedback},
  author={Ouyang, Long and Wu, Jeffrey and Jiang, Xu and Almeida, Diogo and Wainwright, Carroll and Mishkin, Pamela and Zhang, Chong and Agarwal, Sandhini and Slama, Katarina and Ray, Alex and others},
  journal={Advances in Neural Information Processing Systems},
  volume={35},
  pages={27730--27744},
  year={2022}
}

@article{stiennon2020learning,
  title={Learning to summarize with human feedback},
  author={Stiennon, Nisan and Ouyang, Long and Wu, Jeffrey and Ziegler, Daniel and Lowe, Ryan and Voss, Chelsea and Radford, Alec and Amodei, Dario and Christiano, Paul F},
  journal={Advances in Neural Information Processing Systems},
  volume={33},
  pages={3008--3021},
  year={2020}
}

@article{christiano2017deep,
  title={Deep reinforcement learning from human preferences},
  author={Christiano, Paul F and Leike, Jan and Brown, Tom and Martic, Miljan and Legg, Shane and Amodei, Dario},
  journal={Advances in Neural Information Processing Systems},
  volume={30},
  year={2017}
}

@inproceedings{jaques2017sequence,
  title={Sequence tutor: Conservative fine-tuning of sequence generation models with {KL}-control},
  author={Jaques, Natasha and Gu, Shixiang and Bahdanau, Dzmitry and Hern{\'a}ndez-Lobato, Jos{\'e} Miguel and Turner, Richard E and Eck, Douglas},
  booktitle={International Conference on Machine Learning},
  pages={1645--1654},
  year={2017},
  organization={PMLR}
}

@inproceedings{jaques2020human,
  title={Human-centric dialog training via offline reinforcement learning},
  author={Jaques, Natasha and Shen, Judy Hanwen and Ghandeharioun, Asma and Ferguson, Craig and Lapedriza, Agata and Jones, Noah and Gu, Shixiang and Picard, Rosalind},
  booktitle={Proceedings of the 2020 Conference on Empirical Methods in Natural Language Processing (EMNLP)},
  pages={3985--4003},
  year={2020}
}

@article{ziegler2019fine,
  title={Fine-tuning language models from human preferences},
  author={Ziegler, Daniel M and Stiennon, Nisan and Wu, Jeffrey and Brown, Tom B and Radford, Alec and Amodei, Dario and Christiano, Paul and Irving, Geoffrey},
  journal={arXiv preprint arXiv:1909.08593},
  year={2019}
}

@article{peng2019advantage,
  title={Advantage-weighted regression: Simple and scalable off-policy reinforcement learning},
  author={Peng, Xue Bin and Kumar, Aviral and Zhang, Grace and Levine, Sergey},
  journal={arXiv preprint arXiv:1910.00177},
  year={2019}
}

@inproceedings{peters2007reinforcement,
  title={Reinforcement learning by reward-weighted regression for operational space control},
  author={Peters, Jan and Schaal, Stefan},
  booktitle={Proceedings of the 24th International Conference on Machine Learning},
  pages={745--750},
  year={2007}
}

@inproceedings{go2023aligning,
  title={Aligning language models with preferences through f-divergence minimization},
  author={Go, Dongyoung and Korbak, Tomasz and Kruszewski, Germ{\'a}n and Rozen, Jos and Ryu, Nahyeon and Dymetman, Marc},
  booktitle={International Conference on Machine Learning},
  volume={202},
  pages={11546--11583},
  year={2023},
  organization={PMLR}
}

@article{rafailov2023direct,
  title={Direct preference optimization: {Y}our language model is secretly a reward model},
  author={Rafailov, Rafael and Sharma, Archit and Mitchell, Eric and Manning, Christopher D and Ermon, Stefano and Finn, Chelsea},
  journal={Advances in Neural Information Processing Systems},
  volume={36},
  pages={53728--53741},
  year={2023}
}

@article{bradley1952rank,
  title={Rank analysis of incomplete block designs: {I}. {T}he method of paired comparisons},
  author={Bradley, Ralph Allan and Terry, Milton E},
  journal={Biometrika},
  volume={39},
  number={3/4},
  pages={324--345},
  year={1952},
  publisher={JSTOR}
}

@article{wu2024beta,
  title={$\beta$-{DPO}: Direct preference optimization with dynamic $\beta$},
  author={Wu, Junkang and Xie, Yuexiang and Yang, Zhengyi and Wu, Jiancan and Gao, Jinyang and Ding, Bolin and Wang, Xiang and He, Xiangnan},
  journal={Advances in Neural Information Processing Systems},
  volume={37},
  pages={129944--129966},
  year={2024}
}

@article{lee2025kl,
  title={{KL} Penalty Control via Perturbation for Direct Preference Optimization},
  author={Lee, Sangkyu and Han, Janghoon and Song, Hosung and Choi, Stanley Jungkyu and Lee, Honglak and Yu, Youngjae},
  journal={Advances in Neural Information Processing Systems},
  volume={38},
  pages={168095--168121},
  year={2025}
}

@article{schulman2017proximal,
  title={Proximal policy optimization algorithms},
  author={Schulman, John and Wolski, Filip and Dhariwal, Prafulla and Radford, Alec and Klimov, Oleg},
  journal={arXiv preprint arXiv:1707.06347},
  year={2017}
}

@article{bai2022training,
  title={Training a helpful and harmless assistant with reinforcement learning from human feedback},
  author={Bai, Yuntao and Jones, Andy and Ndousse, Kamal and Askell, Amanda and Chen, Anna and DasSarma, Nova and Drain, Dawn and Fort, Stanislav and Ganguli, Deep and Henighan, Tom and others},
  journal={arXiv preprint arXiv:2204.05862},
  year={2022}
}

@inproceedings{chen2024odin,
  title={{ODIN}: Disentangled reward mitigates hacking in {RLHF}},
  author={Chen, Lichang and Zhu, Chen and Chen, Jiuhai and Soselia, Davit and Zhou, Tianyi and Goldstein, Tom and Huang, Heng and Shoeybi, Mohammad and Catanzaro, Bryan},
  booktitle={International Conference on Machine Learning},
  volume={235},
  pages={7935--7952},
  year={2024},
  organization={PMLR}
}

@inproceedings{gao2023scaling,
  title={Scaling laws for reward model overoptimization},
  author={Gao, Leo and Schulman, John and Hilton, Jacob},
  booktitle={International Conference on Machine Learning},
  pages={10835--10866},
  year={2023},
  organization={PMLR}
}

@article{havrilla2024glore,
  title={{GLoRe}: When, where, and how to improve {LLM} reasoning via global and local refinements},
  author={Havrilla, Alex and Raparthy, Sharath and Nalmpantis, Christoforus and Dwivedi-Yu, Jane and Zhuravinskyi, Maksym and Hambro, Eric and Raileanu, Roberta},
  journal={arXiv preprint arXiv:2402.10963},
  year={2024}
}

@article{luo2023wizardmath,
  title={{WizardMath}: Empowering mathematical reasoning for large language models via reinforced {Evol-Instruct}},
  author={Luo, Haipeng and Sun, Qingfeng and Xu, Can and Zhao, Pu and Lou, Jianguang and Tao, Chongyang and Geng, Xiubo and Lin, Qingwei and Chen, Shifeng and Zhang, Dongmei},
  journal={arXiv preprint arXiv:2308.09583},
  year={2023}
}

@article{anthony2017thinking,
  title={Thinking fast and slow with deep learning and tree search},
  author={Anthony, Thomas and Tian, Zheng and Barber, David},
  journal={Advances in Neural Information Processing Systems},
  volume={30},
  year={2017}
}

@inproceedings{azar2024general,
  title={A general theoretical paradigm to understand learning from human preferences},
  author={Azar, Mohammad Gheshlaghi and Guo, Zhaohan Daniel and Piot, Bilal and Munos, Remi and Rowland, Mark and Valko, Michal and Calandriello, Daniele},
  booktitle={International Conference on Artificial Intelligence and Statistics},
  pages={4447--4455},
  year={2024},
  organization={PMLR}
}

@inproceedings{ethayarajh2024kto,
  title={Model alignment as prospect theoretic optimization},
  author={Ethayarajh, Kawin and Xu, Winnie and Muennighoff, Niklas and Jurafsky, Dan and Kiela, Douwe},
  booktitle={International Conference on Machine Learning},
  volume={235},
  pages={12634--12651},
  year={2024},
  organization={PMLR}
}

@article{yuan2023rrhf,
  title={{RRHF}: Rank responses to align language models with human feedback without tears},
  author={Yuan, Zheng and Yuan, Hongyi and Tan, Chuanqi and Wang, Wei and Huang, Songfang and Huang, Fei},
  journal={arXiv preprint arXiv:2304.05302},
  year={2023}
}

@article{zhao2023slic,
  title={{SLiC-HF}: Sequence likelihood calibration with human feedback},
  author={Zhao, Yao and Joshi, Rishabh and Liu, Tianqi and Khalman, Misha and Saleh, Mohammad and Liu, Peter J},
  journal={arXiv preprint arXiv:2305.10425},
  year={2023}
}

@inproceedings{xu2024contrastive,
  title={Contrastive preference optimization: Pushing the boundaries of {LLM} performance in machine translation},
  author={Xu, Haoran and Sharaf, Amr and Chen, Yunmo and Tan, Weiting and Shen, Lingfeng and Van Durme, Benjamin and Murray, Kenton and Kim, Young Jin},
  booktitle={International Conference on Machine Learning},
  pages={55204--55224},
  year={2024},
  organization={PMLR}
}

@inproceedings{hong2024orpo,
  title={{ORPO}: Monolithic preference optimization without reference model},
  author={Hong, Jiwoo and Lee, Noah and Thorne, James},
  booktitle={Proceedings of the 2024 Conference on Empirical Methods in Natural Language Processing},
  pages={11170--11189},
  year={2024}
}

@article{meng2024simpo,
  title={{SimPO}: Simple preference optimization with a reference-free reward},
  author={Meng, Yu and Xia, Mengzhou and Chen, Danqi},
  journal={Advances in Neural Information Processing Systems},
  volume={37},
  pages={124198--124235},
  year={2024}
}

@inproceedings{park2024disentangling,
  title={Disentangling length from quality in direct preference optimization},
  author={Park, Ryan and Rafailov, Rafael and Ermon, Stefano and Finn, Chelsea},
  booktitle={Findings of the Association for Computational Linguistics: ACL 2024},
  pages={4998--5017},
  year={2024}
}

@article{gorbatovski2024learn,
  title={Learn your reference model for real good alignment},
  author={Gorbatovski, Alexey and Shaposhnikov, Boris and Malakhov, Alexey and Surnachev, Nikita and Aksenov, Yaroslav and Maksimov, Ian and Balagansky, Nikita and Gavrilov, Daniil},
  journal={arXiv preprint arXiv:2404.09656},
  year={2024}
}

@article{dubois2024length,
  title={Length-controlled {AlpacaEval}: A simple way to debias automatic evaluators},
  author={Dubois, Yann and Galambosi, Bal{\'a}zs and Liang, Percy and Hashimoto, Tatsunori B},
  journal={arXiv preprint arXiv:2404.04475},
  year={2024}
}

@article{zheng2023judging,
  title={Judging {LLM}-as-a-judge with {MT-Bench} and {Chatbot Arena}},
  author={Zheng, Lianmin and Chiang, Wei-Lin and Sheng, Ying and Zhuang, Siyuan and Wu, Zhanghao and Zhuang, Yonghao and Lin, Zi and Li, Zhuohan and Li, Dacheng and Xing, Eric and others},
  journal={Advances in Neural Information Processing Systems},
  volume={36},
  pages={46595--46623},
  year={2023}
}

@article{grattafiori2024llama,
  title={The {Llama 3} herd of models},
  author={Grattafiori, Aaron and Dubey, Abhimanyu and Jauhri, Abhinav and Pandey, Abhinav and Kadian, Abhishek and Al-Dahle, Ahmad and Letman, Aiesha and Mathur, Akhil and Schelten, Alan and Vaughan, Alex and others},
  journal={arXiv preprint arXiv:2407.21783},
  year={2024}
}

@article{yang2025qwen3,
  title={{Qwen3} technical report},
  author={Yang, An and Li, Anfeng and Yang, Baosong and Zhang, Beichen and Hui, Binyuan and Zheng, Bo and Yu, Bowen and Gao, Chang and Huang, Chengen and Lv, Chenxu and others},
  journal={arXiv preprint arXiv:2505.09388},
  year={2025}
}

@inproceedings{ding2023enhancing,
  title={Enhancing chat language models by scaling high-quality instructional conversations},
  author={Ding, Ning and Chen, Yulin and Xu, Bokai and Qin, Yujia and Hu, Shengding and Liu, Zhiyuan and Sun, Maosong and Zhou, Bowen},
  booktitle={Proceedings of the 2023 Conference on Empirical Methods in Natural Language Processing},
  pages={3029--3051},
  year={2023}
}

@article{cui2023ultrafeedback,
  title={{UltraFeedback}: Boosting language models with scaled {AI} feedback},
  author={Cui, Ganqu and Yuan, Lifan and Ding, Ning and Yao, Guanming and He, Bingxiang and Zhu, Wei and Ni, Yuan and Xie, Guotong and Xie, Ruobing and Lin, Yankai and others},
  journal={arXiv preprint arXiv:2310.01377},
  year={2023}
}

@article{li2024crowdsourced,
  title={From crowdsourced data to high-quality benchmarks: {Arena-Hard} and {BenchBuilder} pipeline},
  author={Li, Tianle and Chiang, Wei-Lin and Frick, Evan and Dunlap, Lisa and Wu, Tianhao and Zhu, Banghua and Gonzalez, Joseph E and Stoica, Ion},
  journal={arXiv preprint arXiv:2406.11939},
  year={2024}
}

@article{rafailov2024scaling,
  title={Scaling laws for reward model overoptimization in direct alignment algorithms},
  author={Rafailov, Rafael and Chittepu, Yaswanth and Park, Ryan and Sikchi, Harshit and Hejna, Joey and Knox, W Bradley and Finn, Chelsea and Niekum, Scott},
  journal={Advances in Neural Information Processing Systems},
  volume={37},
  pages={126207--126242},
  year={2024}
}

@inproceedings{amini2024direct,
  title={Direct preference optimization with an offset},
  author={Amini, Afra and Vieira, Tim and Cotterell, Ryan},
  booktitle={Findings of the Association for Computational Linguistics: ACL 2024},
  pages={9954--9972},
  year={2024}
}

@inproceedings{korbak2023pretraining,
  title={Pretraining language models with human preferences},
  author={Korbak, Tomasz and Shi, Kejian and Chen, Angelica and Bhalerao, Rasika Vinayak and Buckley, Christopher and Phang, Jason and Bowman, Samuel R and Perez, Ethan},
  booktitle={International Conference on Machine Learning},
  pages={17506--17533},
  year={2023},
  organization={PMLR}
}

@inproceedings{zheng2023click,
  title={{Click}: Controllable text generation with sequence likelihood contrastive learning},
  author={Zheng, Chujie and Ke, Pei and Zhang, Zheng and Huang, Minlie},
  booktitle={Findings of the Association for Computational Linguistics: ACL 2023},
  pages={1022--1040},
  year={2023}
}

@article{dai2023safe,
  title={Safe {RLHF}: Safe reinforcement learning from human feedback},
  author={Dai, Josef and Pan, Xuehai and Sun, Ruiyang and Ji, Jiaming and Xu, Xinbo and Liu, Mickel and Wang, Yizhou and Yang, Yaodong},
  journal={arXiv preprint arXiv:2310.12773},
  year={2023}
}

@inproceedings{tian2023fine,
  title={Fine-tuning language models for factuality},
  author={Tian, Katherine and Mitchell, Eric and Yao, Huaxiu and Manning, Christopher D and Finn, Chelsea},
  booktitle={The Twelfth International Conference on Learning Representations},
  year={2024}
}

@inproceedings{wang2024arithmetic,
  title={Arithmetic control of {LLMs} for diverse user preferences: Directional preference alignment with multi-objective rewards},
  author={Wang, Haoxiang and Lin, Yong and Xiong, Wei and Yang, Rui and Diao, Shizhe and Qiu, Shuang and Zhao, Han and Zhang, Tong},
  booktitle={Proceedings of the 62nd Annual Meeting of the Association for Computational Linguistics (Volume 1: Long Papers)},
  pages={8642--8655},
  year={2024}
}

@article{hendrycks2020measuring,
  title={Measuring massive multitask language understanding},
  author={Hendrycks, Dan and Burns, Collin and Basart, Steven and Zou, Andy and Mazeika, Mantas and Song, Dawn and Steinhardt, Jacob},
  journal={arXiv preprint arXiv:2009.03300},
  year={2020}
}

@article{clark2018think,
  title={Think you have solved question answering? {T}ry {ARC}, the {AI2} reasoning challenge},
  author={Clark, Peter and Cowhey, Isaac and Etzioni, Oren and Khot, Tushar and Sabharwal, Ashish and Schoenick, Carissa and Tafjord, Oyvind},
  journal={arXiv preprint arXiv:1803.05457},
  year={2018}
}

@inproceedings{zellers2019hellaswag,
  title={{HellaSwag}: Can a machine really finish your sentence?},
  author={Zellers, Rowan and Holtzman, Ari and Bisk, Yonatan and Farhadi, Ali and Choi, Yejin},
  booktitle={Proceedings of the 57th Annual Meeting of the Association for Computational Linguistics},
  pages={4791--4800},
  year={2019}
}

@inproceedings{lin2022truthfulqa,
  title={{TruthfulQA}: Measuring how models mimic human falsehoods},
  author={Lin, Stephanie and Hilton, Jacob and Evans, Owain},
  booktitle={Proceedings of the 60th Annual Meeting of the Association for Computational Linguistics (Volume 1: Long Papers)},
  pages={3214--3252},
  year={2022}
}

@article{sakaguchi2021winogrande,
  title={{WinoGrande}: An adversarial {W}inograd schema challenge at scale},
  author={Sakaguchi, Keisuke and Le Bras, Ronan and Bhagavatula, Chandra and Choi, Yejin},
  journal={Communications of the ACM},
  volume={64},
  number={9},
  pages={99--106},
  year={2021},
  doi={10.1145/3474381}
}

@article{cobbe2021training,
  title={Training verifiers to solve math word problems},
  author={Cobbe, Karl and Kosaraju, Vineet and Bavarian, Mohammad and Chen, Mark and Jun, Heewoo and Kaiser, Lukasz and Plappert, Matthias and Tworek, Jerry and Hilton, Jacob and Nakano, Reiichiro and others},
  journal={arXiv preprint arXiv:2110.14168},
  year={2021}
}

@inproceedings{sun2025robust,
  title={Robust preference optimization via dynamic target margins},
  author={Sun, Jie and Wu, Junkang and Wu, Jiancan and Zhu, Zhibo and Lu, Xingyu and Zhou, Jun and Ma, Lintao and Wang, Xiang},
  booktitle={Findings of the Association for Computational Linguistics: ACL 2025},
  pages={5399--5416},
  year={2025}
}

@article{wu2024alphadpo,
  title={{AlphaDPO}: Adaptive reward margin for direct preference optimization},
  author={Wu, Junkang and Wang, Xue and Yang, Zhengyi and Wu, Jiancan and Gao, Jinyang and Ding, Bolin and Wang, Xiang and He, Xiangnan},
  journal={arXiv preprint arXiv:2410.10148},
  year={2024}
}
\bibliographystyle{iclr2027_conference}

\appendix
\raggedbottom

\section{Learning Signal $\sigma(-z)$ under Fixed $\beta$}
\label{dpo_qt}
We report the raw per-step trajectories of the learning signal under fixed-$\beta$ DPO with different $\beta$ values. Figures~\ref{fig:qt_fixed_beta_raw_harmless} and~\ref{fig:qt_fixed_beta_raw_helpful} show that
$\sigma(-z)$ generally decreases during training on both Anthropic-HH subsets. 

\begin{figure}[H]
	\centering
	
	\begin{subfigure}[t]{0.48\linewidth}
		\centering
		\includegraphics[width=\linewidth]{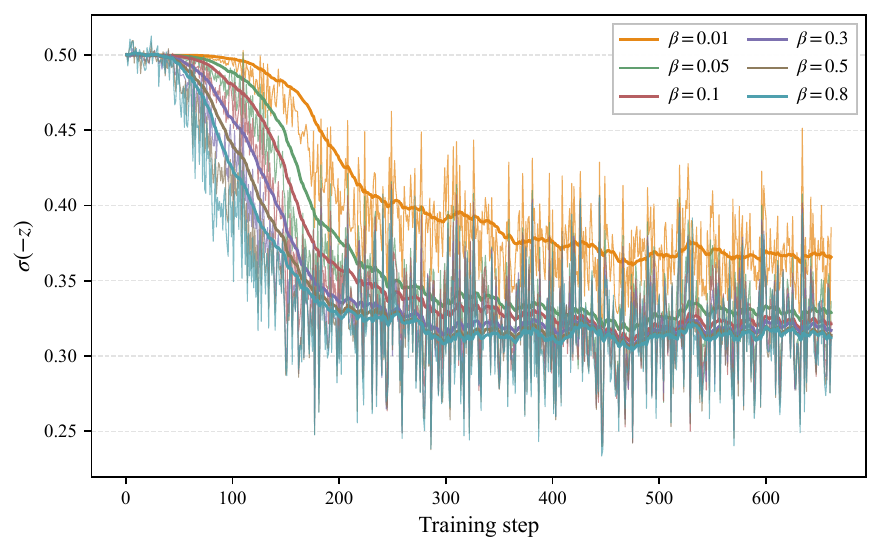}
		\caption{harmless-base}
		\label{fig:qt_fixed_beta_raw_harmless}
	\end{subfigure}
	\hfill
	\begin{subfigure}[t]{0.48\linewidth}
		\centering
		\includegraphics[width=\linewidth]{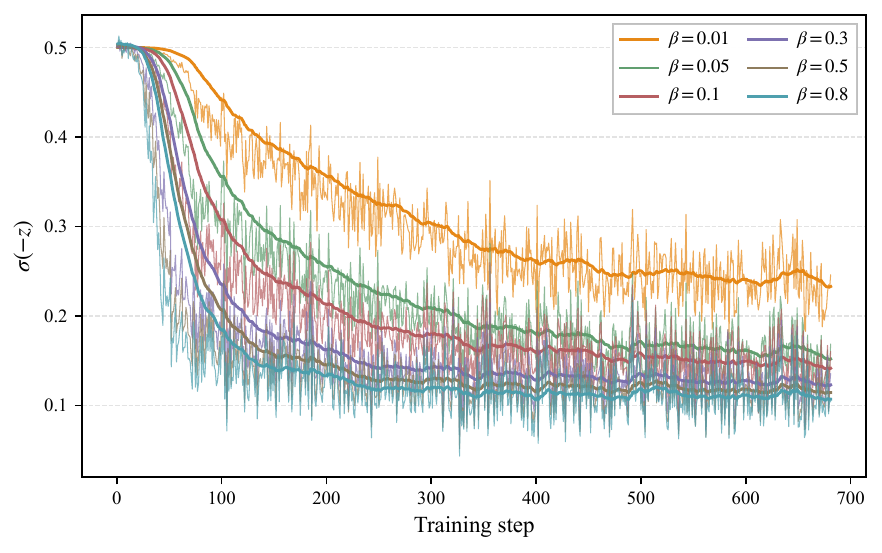}
		\caption{helpful-base}
		\label{fig:qt_fixed_beta_raw_helpful}
	\end{subfigure}
	
	\caption{Learning signal under DPO. Raw per-step trajectories of $\sigma(-z)$ are shown for different fixed $\beta$ values on Anthropic-HH: (a) harmless-base and (b) helpful-base.
	}
	\label{fig:qt_fixed_beta_raw}
\end{figure}

\section{LSC-DPO Algorithm}
\label{app:algorithm}
Algorithm~\ref{alg:lsc_dpo} summarizes the complete training procedure of LSC-DPO, including computation of the learning signal, controller activation, and the subsequent coefficient update described in Section~\ref{sec:lsc_update}.
\begin{algorithm}[H]
	\small
	\caption{Learning-Signal-Controlled Direct Preference Optimization}
	\label{alg:lsc_dpo}
	\begin{algorithmic}[1]
		\Require policy $\pi_\theta$, reference $\pi_{\mathrm{ref}}$, initial $\beta_0$, $z^*$, $\eta$, $\tau_{\mathrm{start}}$
		\State $\beta \leftarrow \beta_0$; control is inactive
		\While{not converged}
		\State Sample a batch of preference triples $(x,y_+,y_-)\sim\mathcal{D}$
		\State Compute margin $m$, scaled margin $z=\beta m$, and signal $\sigma(-z)$
		\State Activate control if $\sigma(-z)\le \tau_{\mathrm{start}}$
		\State If control is active, update $\beta \leftarrow \beta\exp[-\eta(z-z^*)]$
		\State Update $\pi_\theta$ using the DPO loss with the current $\beta$
		\EndWhile
		\State \Return aligned policy $\pi_\theta$
	\end{algorithmic}
\end{algorithm}

\section{Proofs for the Stability Analysis}
\label{appendix:proof}
In this appendix, we provide the formal statements and detailed proofs
for the local stability and bounded tracking results summarized in Section~\ref{sec:stability_analysis}. Recall that the log-space update is
\begin{equation}
	x_{k+1}=F_k(x_k),
	\qquad
	F_k(x):=x-\eta\left(e^x m_k-z^*\right),
	\label{eq:appendix_Fk_def}
\end{equation}
and that the instantaneous equilibrium is
\begin{equation}
	x_k^f=\log\frac{z^*}{m_k}.
	\label{eq:appendix_xkf}
\end{equation}

\subsection{Proof of Local Stability}
\label{prof:local_stability}
The formal local-stability result is stated as follows.
\begin{proposition}[Local stability]
	\label{prop:local-stability}
	Assume $m_k>0$ and $z^*>0$. The instantaneous equilibrium $x_k^f=\log(z^*/m_k)$ is locally asymptotically stable whenever
	\begin{equation}
		0<\eta z^*<2.
		\label{eq:stability_condition_append}
	\end{equation}
	The corresponding equilibrium linearization factor is
	\begin{equation}
		\rho_0 := |1-\eta z^*| < 1.
		\label{eq:rho_def}
	\end{equation}
\end{proposition}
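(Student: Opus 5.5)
The plan is to treat $F_k$, with $m_k$ held fixed, as a smooth one-dimensional autonomous map. We verify that $x_k^f$ is a fixed point, compute the derivative there, and invoke the standard linearization criterion for scalar maps: a fixed point $x^\ast$ of a $C^1$ map with $|F'(x^\ast)|<1$ is locally asymptotically stable. To keep the argument self-contained, we make that criterion quantitative with a mean-value estimate rather than cite it.

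First, check the fixed point. Since $m_k>0$ and $z^*>0$, the quantity $x_k^f=\log(z^*/m_k)$ is well defined. At this point $e^{x_k^f}m_k=z^*$, so the correction term vanishes and $F_k(x_k^f)=x_k^f$.

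Second, compute the derivative. We have $F_k'(x)=1-\eta m_k e^{x}$, and therefore $F_k'(x_k^f)=1-\eta z^*$, which does not depend on $m_k$. The condition $0<\eta z^*<2$ is equivalent to $-1<1-\eta z^*<1$, that is, $\rho_0=|1-\eta z^*|<1$. This gives the second display of the proposition.

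Third, turn the linearization into a contraction on a neighborhood. Because $F_k'$ is continuous, fix any $\rho\in(\rho_0,1)$ and choose $r>0$ so that $|F_k'(x)|\le\rho$ for $|x-x_k^f|\le r$. A concrete choice is possible: $|F_k'(x)-F_k'(x_k^f)|=\eta z^*\,|e^{x-x_k^f}-1|\le \eta z^*(e^{r}-1)$, so it suffices that $\eta z^*(e^r-1)\le\rho-\rho_0$. By the mean value theorem, for $|x-x_k^f|\le r$,
\[
|F_k(x)-x_k^f|=|F_k(x)-F_k(x_k^f)|\le\rho\,|x-x_k^f|.
\]
Hence the interval is invariant, and iterating the frozen map gives $|x_n-x_k^f|\le\rho^n|x_0-x_k^f|\to0$. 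This yields both Lyapunov stability and attraction, so the fixed point is locally asymptotically stable.

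The only step needing care is this third one. The naive derivative bound must be made uniform on an explicit neighborhood, since $F_k'$ varies exponentially in $x$. The identity $e^{x}m_k=z^*e^{x-x_k^f}$ handles this, because it makes the radius $r$ depend only on $\eta z^*$ and not on $m_k$. We also note that stability here refers to the frozen map $F_k$, consistent with the instantaneous-equilibrium viewpoint, and that the moving-target case is deferred to the bounded-tracking theorem. The explicit $\rho$ and $r$ from this step are exactly what that theorem will reuse.
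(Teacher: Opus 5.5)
Your proof is correct and follows essentially the same route as the paper: check $F_k(x_k^f)=x_k^f$, compute $F_k'(x_k^f)=1-\eta z^*$, and apply the scalar linearization criterion, with $|1-\eta z^*|<1 \iff 0<\eta z^*<2$. The one difference is that you prove that criterion yourself, via an explicit mean-value contraction on a neighborhood whose radius depends only on $\eta z^*$; the paper simply cites the criterion here and uses the analogous neighborhood argument, based on $F_k'(x_k^f+e)=1-\eta z^{*}e^{e}$, only later in the bounded-tracking theorem.
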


This result follows by linearizing $F_k$ around $x_k^f$, where
\begin{equation}
	F_k'(x_k^f)=1-\eta z^*.
\end{equation}

\begin{proof}[Proof]
	Since $m_k>0$ and $z^*>0$, the instantaneous equilibrium is well defined and satisfies
	\begin{equation}
		e^{x_k^f}m_k=z^*.
	\end{equation}
	The derivative of the step-dependent map is
	\begin{equation}
		F_k'(x)
		=
		1-\eta e^x m_k.
	\end{equation}
	Evaluating this derivative at the equilibrium gives
	\begin{equation}
		F_k'(x_k^f)
		=
		1-\eta e^{x_k^f}m_k
		=
		1-\eta z^*.
	\end{equation}
	For a one-dimensional discrete-time dynamical system, a fixed point is locally asymptotically stable if the derivative at the fixed point has magnitude smaller than one. Hence,
	\begin{equation}
		|F_k'(x_k^f)|<1
		\quad\Longleftrightarrow\quad
		|1-\eta z^*|<1.
	\end{equation}
	Solving this inequality yields
	\begin{equation}
		0<\eta z^*<2.
	\end{equation}
	Under this condition, the equilibrium linearization factor is
	\begin{equation}
		\rho_0
		:=
		|F_k'(x_k^f)|
		=
		|1-\eta z^*|
		<1,
	\end{equation}
	which proves the proposition.
\end{proof}

\subsection{Bounded Tracking Theorem and Proof}
\label{proof:theorem}
\begin{theorem}[Bounded tracking]
	\label{thm:bounded_tracking}
	Assume the local stability condition in
	Eq.~\ref{eq:stability_condition_main} holds, and define
	\[
	\rho_0 := |1-\eta z^*| < 1.
	\]
	For any $\rho$ satisfying
	\[
	\rho_0 < \rho < 1,
	\]
	there exists a radius $r>0$ such that, for every $k$,
	\begin{equation}
		|F_k'(x)| \le \rho
		\qquad
		\text{whenever }
		|x-x_k^f| \le r.
		\label{eq:local_contraction_neighborhood}
	\end{equation}
	Suppose that there exists a step $T$ such that
	\begin{equation}
		|x_T-x_T^f| \le r,
		\label{eq:initial_tracking_neighborhood}
	\end{equation}
	and that, for some constant $\bar{\delta}\ge 0$ and all $k\ge T$,
	\begin{equation}
		|x_{k+1}^f-x_k^f| \le \bar{\delta},
		\qquad
		\bar{\delta} \le (1-\rho)r.
		\label{eq:delta_assumption}
	\end{equation}
	Then the iterates remain within the local contraction neighborhood,
	\[
	|x_k-x_k^f| \le r,
	\qquad
	\forall k\ge T,
	\]
	and satisfy
	\begin{equation}
		|x_k-x_k^f|
		\le
		\rho^{k-T}|x_T-x_T^f|
		+
		\frac{\bar{\delta}}{1-\rho},
		\qquad
		\forall k\ge T.
		\label{eq:ultimate_bound_appendix}
	\end{equation}
\end{theorem}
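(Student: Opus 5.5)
The plan is to reduce everything to the tracking error $e_k := |x_k - x_k^f|$ and derive the scalar recursion $e_{k+1} \le \rho\, e_k + \bar{\delta}$. It is then unrolled, with an induction guaranteeing that the iterates never leave the contraction neighborhood. Throughout, I take $m_k>0$ for all relevant $k$, as in Proposition~\ref{prop:local-stability}, so that $x_k^f=\log(z^*/m_k)$ is well defined and $e^{x_k^f}m_k=z^*$.

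\emph{Step 1: uniform contraction radius.} The main subtlety in the statement is that $r$ must work for \emph{every} $k$, although $m_k$ (and hence $F_k$) changes with $k$. I would handle this by rewriting the derivative in deviation coordinates. Using $m_k = z^* e^{-x_k^f}$,
\begin{equation}
F_k'(x) = 1-\eta e^{x}m_k = 1-\eta z^* e^{\,x-x_k^f} =: g(x-x_k^f),
\end{equation}
where $g(d)=1-\eta z^* e^{d}$ does not depend on $k$. Since $g$ is continuous and $|g(0)|=\rho_0<\rho$, there is an $r>0$ with $|g(d)|\le\rho$ for all $|d|\le r$. This gives \eqref{eq:local_contraction_neighborhood} simultaneously for all $k$. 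If desired, $r$ can be written explicitly from $\rho_0<\rho$ by solving $|1-\eta z^* e^{\pm r}|\le\rho$.

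\emph{Step 2: one-step contraction around the moving equilibrium.} Suppose $e_k\le r$. The segment between $x_k$ and $x_k^f$ lies in the interval $[x_k^f-r,\,x_k^f+r]$, and $F_k(x_k^f)=x_k^f$. The mean value theorem therefore gives
\begin{equation}
|x_{k+1}-x_k^f| = |F_k(x_k)-F_k(x_k^f)| \le \rho\, e_k .
\end{equation}
The triangle inequality together with the drift bound \eqref{eq:delta_assumption} then yields $e_{k+1}\le \rho e_k+\bar{\delta}$.

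\emph{Step 3: invariance and unrolling.} I would argue by induction from the base case \eqref{eq:initial_tracking_neighborhood}. If $e_k\le r$, Step 2 gives $e_{k+1}\le \rho r+(1-\rho)r = r$, so the iterates stay in the neighborhood for all $k\ge T$. Consequently the recursion of Step 2 is valid at every $k\ge T$. Iterating it gives
\begin{equation}
e_k \le \rho^{k-T}e_T + \bar{\delta}\sum_{j=0}^{k-T-1}\rho^{j} \le \rho^{k-T}e_T + \frac{\bar{\delta}}{1-\rho},
\end{equation}
which is \eqref{eq:ultimate_bound_appendix}.

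The only genuine obstacle is Step 1, namely obtaining an $r$ independent of the nonstationary $m_k$. I expect the shift identity $F_k'(x)=g(x-x_k^f)$ to dispose of it cleanly. After that, the remaining steps are a standard perturbed-contraction argument.
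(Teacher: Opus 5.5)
Your proposal is correct and follows the paper's proof essentially step for step: the paper also uses the shift identity $F_k'(x_k^f+e)=1-\eta z^* e^{e}$ to get a radius $r$ independent of $m_k$. It then combines the mean value theorem with the drift bound to obtain $|e_{k+1}|\le\rho|e_k|+\bar{\delta}$, proves invariance of the radius-$r$ neighborhood by induction using $\bar{\delta}\le(1-\rho)r$, and unrolls the geometric recursion. The only cosmetic difference is that you apply the triangle inequality to $|x_{k+1}-x_k^f|$ directly, whereas the paper packages the same computation as the exact identity $e_{k+1}=F_k'(\xi_k)e_k+(x_k^f-x_{k+1}^f)$ in Lemma~\ref{lem:one-step-tracking}.
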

The complete proof is given below. An interpretation of the resulting tracking bound is provided in Appendix~\ref{sec:tracking_bound_interpretation}.

We define the tracking error by
\begin{equation}
	e_k:=x_k-x_k^f.
	\label{eq:appendix_ek_def}
\end{equation}

\begin{lemma}[One-step tracking recursion]
	\label{lem:one-step-tracking}
	For each step $k$, there exists some $\xi_k$ between $x_k$ and $x_k^f$ such that
	\begin{equation}
		e_{k+1}
		=
		F_k'(\xi_k)e_k
		+
		\left(x_k^f-x_{k+1}^f\right).
		\label{eq:appendix_error_exact}
	\end{equation}
	Consequently,
	\begin{equation}
		|e_{k+1}|
		\le
		|F_k'(\xi_k)|\,|e_k|
		+
		|x_{k+1}^f-x_k^f|.
		\label{eq:appendix_error_bound}
	\end{equation}
\end{lemma}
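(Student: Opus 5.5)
The plan is to compare $x_{k+1}$ with the \emph{current} equilibrium $x_k^f$, since $F_k$ fixes $x_k^f$ by Eq.~\ref{eq:appendix_xkf}, and then account for the equilibrium drift separately. First I will verify the fixed-point identity $F_k(x_k^f)=x_k^f$. This holds because $e^{x_k^f}m_k=z^*$, so the correction term $\eta(e^{x_k^f}m_k-z^*)$ vanishes. This is the only place the assumption $m_k>0$, $z^*>0$ enters, through the well-definedness of $x_k^f$.

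Next I will write
\begin{equation*}
e_{k+1}=x_{k+1}-x_{k+1}^f=\bigl(F_k(x_k)-F_k(x_k^f)\bigr)+\bigl(x_k^f-x_{k+1}^f\bigr),
\end{equation*}
using $x_{k+1}=F_k(x_k)$ and $F_k(x_k^f)=x_k^f$. The map $F_k(x)=x-\eta(e^xm_k-z^*)$ is $C^\infty$ on $\mathbb{R}$. So the mean value theorem applies on the closed interval between $x_k$ and $x_k^f$, giving $\xi_k$ in that interval with $F_k(x_k)-F_k(x_k^f)=F_k'(\xi_k)(x_k-x_k^f)=F_k'(\xi_k)e_k$. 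The degenerate case $x_k=x_k^f$ is handled trivially by taking $\xi_k=x_k$. Substituting yields Eq.~\ref{eq:appendix_error_exact}. The inequality Eq.~\ref{eq:appendix_error_bound} then follows from the triangle inequality and $|x_k^f-x_{k+1}^f|=|x_{k+1}^f-x_k^f|$.

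There is no real obstacle here. The only point requiring care is bookkeeping: the mean value step must be applied to the single map $F_k$, whose fixed point is $x_k^f$, and not to $F_{k+1}$. The mismatch between $x_k^f$ and $x_{k+1}^f$ must be isolated as the additive drift term rather than absorbed into the derivative factor.
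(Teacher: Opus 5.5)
Your proposal is correct and follows essentially the same route as the paper: add and subtract $x_k^f$ using the fixed-point identity $F_k(x_k^f)=x_k^f$, apply the mean value theorem to $F_k$ between $x_k$ and $x_k^f$, then take absolute values. Your handling of the degenerate case $x_k=x_k^f$, and your note on where $m_k>0$, $z^*>0$ are needed, make explicit two points the paper leaves implicit.
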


\begin{proof}
	By definition,
	\begin{equation}
		e_{k+1}
		=
		x_{k+1}-x_{k+1}^f.
	\end{equation}
	Using the update $x_{k+1}=F_k(x_k)$ and adding and subtracting $x_k^f$, we obtain
	\begin{align}
		e_{k+1}
		&=
		F_k(x_k)-x_{k+1}^f \nonumber \\
		&=
		\left(F_k(x_k)-F_k(x_k^f)\right)
		+
		\left(x_k^f-x_{k+1}^f\right),
	\end{align}
	where we used $F_k(x_k^f)=x_k^f$. By the mean value theorem, there exists some $\xi_k$ between $x_k$ and $x_k^f$ such that
	\begin{equation}
		F_k(x_k)-F_k(x_k^f)
		=
		F_k'(\xi_k)(x_k-x_k^f)
		=
		F_k'(\xi_k)e_k.
	\end{equation}
	Substituting this identity gives Eq.~\ref{eq:appendix_error_exact}. Taking absolute values gives Eq.~\ref{eq:appendix_error_bound}.
\end{proof}

\begin{proof}[Proof of Theorem~\ref{thm:bounded_tracking}]
	From Proposition~\ref{prop:local-stability}, the derivative at the instantaneous equilibrium satisfies
	\begin{equation}
		\rho_0
		:=
		|F_k'(x_k^f)|
		=
		|1-\eta z^*|
		<1.
	\end{equation}
	Choose any $\rho$ such that
	\begin{equation}
		\rho_0<\rho<1.
	\end{equation}
	For a displacement $e$ from the instantaneous equilibrium,
	\begin{equation}
		F_k'(x_k^f+e)
		=
		1-\eta e^{x_k^f+e}m_k
		=
		1-\eta z^* e^e.
		\label{eq:derivative_relative_equilibrium}
	\end{equation}
	The right-hand side is independent of $m_k$. Since it is continuous in $e$ and equals $1-\eta z^*$ at $e=0$, there exists a radius $r>0$ such that
	\begin{equation}
		|F_k'(x_k^f+e)|
		\le
		\rho,
		\qquad
		\text{for all } |e|\le r
	\end{equation}
	and for all $k\ge T$.
	
	We now show that this neighborhood is invariant. By assumption,
	\begin{equation}
		|e_T|\le r.
	\end{equation}
	Suppose inductively that $|e_k|\le r$. Since $\xi_k$ lies between $x_k$ and $x_k^f$, it also satisfies
	\begin{equation}
		|\xi_k-x_k^f|
		\le
		|x_k-x_k^f|
		=
		|e_k|
		\le r.
	\end{equation}
	Therefore,
	\begin{equation}
		|F_k'(\xi_k)|
		\le
		\rho.
	\end{equation}
	Using Lemma~\ref{lem:one-step-tracking} and the bounded-drift assumption gives
	\begin{equation}
		|e_{k+1}|
		\le
		\rho |e_k|+\bar{\delta}.
		\label{eq:appendix_basic_recursion}
	\end{equation}
	Since $\bar{\delta}\le(1-\rho)r$,
	\begin{equation}
		|e_{k+1}|
		\le
		\rho r+(1-\rho)r
		=
		r.
	\end{equation}
	Hence, by induction,
	\begin{equation}
		|e_k|\le r,
		\qquad
		\forall k\ge T.
	\end{equation}
	Thus, the local contraction bound used in Eq.~\ref{eq:appendix_basic_recursion} remains valid along the entire tracked trajectory.
	
	Unrolling Eq.~\ref{eq:appendix_basic_recursion} gives
	\begin{equation}
		|e_k|
		\le
		\rho^{k-T}|e_T|
		+
		\bar{\delta}\sum_{j=0}^{k-T-1}\rho^j.
		\label{eq:appendix_geometric_form}
	\end{equation}
	Since $0\le \rho<1$,
	\begin{equation}
		\sum_{j=0}^{k-T-1}\rho^j
		=
		\frac{1-\rho^{k-T}}{1-\rho}
		\le
		\frac{1}{1-\rho}.
	\end{equation}
	Substituting this bound into Eq.~\ref{eq:appendix_geometric_form} yields
	\begin{equation}
		|e_k|
		\le
		\rho^{k-T}|e_T|
		+
		\frac{\bar{\delta}}{1-\rho},
		\qquad
		\forall k\ge T.
	\end{equation}
	Since $e_k=x_k-x_k^f$, this proves the theorem.
\end{proof}

\subsection{Interpretation of the Bound}
\label{sec:tracking_bound_interpretation}
The bound
\begin{equation}
	|e_k|
	\le
	\rho^{k-T}|e_T|
	+
	\frac{\bar{\delta}}{1-\rho}
\end{equation}
contains two terms. The first term, $\rho^{k-T}|e_T|$, is a transient error that decays geometrically. The second term, $\bar{\delta}/(1-\rho)$, is a persistent tracking-error bound controlled by the maximum equilibrium drift. Therefore, the log-space $\beta$ dynamics remains close to the instantaneous equilibrium trajectory as long as the equilibrium drift is bounded and the trajectory stays within the local contraction neighborhood.

\section{A Sufficient Condition for Bounded Equilibrium Drift}
\label{appendix:drift_condition}

The bounded tracking theorem assumes that the moving equilibrium satisfies
\[
|x_{k+1}^f-x_k^f|\le \bar{\delta}.
\]
We show that this condition follows from a simple regularity condition on the margin statistic.

\begin{lemma}[Sufficient condition for bounded equilibrium drift]
	Assume that there exist constants $T\in\mathbb{N}$, $m_{\min}>0$, and $\varepsilon\ge 0$ such that for all $k\ge T$,
	\[
	m_k\ge m_{\min},
	\qquad
	|m_{k+1}-m_k|\le \varepsilon.
	\]
	Then
	\[
	|x_{k+1}^f-x_k^f|
	\le
	\frac{\varepsilon}{m_{\min}},
	\qquad
	\forall k\ge T.
	\]
\end{lemma}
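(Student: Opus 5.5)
The plan is to use the explicit form of the instantaneous equilibrium, $x_k^f=\log(z^*/m_k)=\log z^*-\log m_k$. For $k\ge T$ both $m_k$ and $m_{k+1}$ are at least $m_{\min}>0$, so both equilibria are well defined; $z^*>0$ is assumed as in Proposition~\ref{prop:local-stability}. The $\log z^*$ terms cancel in the difference, which leaves
\begin{equation*}
	x_{k+1}^f-x_k^f=-\left(\log m_{k+1}-\log m_k\right).
\end{equation*}
The whole task therefore reduces to a Lipschitz bound for $\log$ on the half-line $[m_{\min},\infty)$.

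Next we apply the mean value theorem to $\log$ on the interval with endpoints $m_k$ and $m_{k+1}$. This gives some $\zeta_k$ between the two values such that $\log m_{k+1}-\log m_k=(m_{k+1}-m_k)/\zeta_k$. Since both endpoints are at least $m_{\min}$, so is $\zeta_k$, hence $1/\zeta_k\le 1/m_{\min}$. Combining this with $|m_{k+1}-m_k|\le\varepsilon$ yields $|x_{k+1}^f-x_k^f|\le \varepsilon/m_{\min}$ for all $k\ge T$. If $m_{k+1}=m_k$ the difference is zero and the bound is trivial.

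There is no real obstacle. The only point needing care is that the interval between $m_k$ and $m_{k+1}$ stays inside $[m_{\min},\infty)$, which holds because it is convex and both endpoints lie in it.

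Finally, the lemma makes Theorem~\ref{thm:bounded_tracking} applicable with $\bar{\delta}=\varepsilon/m_{\min}$, provided $\varepsilon\le(1-\rho)r\,m_{\min}$.
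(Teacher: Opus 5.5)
Your proposal is correct and follows the paper's proof essentially verbatim: cancel the $\log z^*$ terms, apply the mean value theorem to $\log$ between $m_k$ and $m_{k+1}$, and use $\zeta_k\ge m_{\min}$ to get the bound $\varepsilon/m_{\min}$. Your closing remark that Theorem~\ref{thm:bounded_tracking} then applies when $\varepsilon\le(1-\rho)r\,m_{\min}$ also matches the paper's concluding observation.
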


\begin{proof}
	Using $x_k^f=\log(z^*/m_k)=\log z^*-\log m_k$, we have
	\[
	|x_{k+1}^f-x_k^f|
	=
	|\log m_{k+1}-\log m_k|.
	\]
	By the mean value theorem, for some $\zeta_k$ between $m_k$ and $m_{k+1}$,
	\[
	|\log m_{k+1}-\log m_k|
	=
	\frac{1}{\zeta_k}|m_{k+1}-m_k|.
	\]
	Since $m_k,m_{k+1}\ge m_{\min}>0$, we have $\zeta_k\ge m_{\min}$. Therefore,
	\[
	|x_{k+1}^f-x_k^f|
	\le
	\frac{1}{m_{\min}}|m_{k+1}-m_k|
	\le
	\frac{\varepsilon}{m_{\min}}.
	\]
\end{proof}

This lemma provides an explicit bound on the equilibrium drift whenever the margin statistic stays bounded away from zero and changes gradually across training steps. In particular, the drift condition in Theorem~\ref{thm:bounded_tracking} is satisfied whenever $\varepsilon/m_{\min} \le (1-\rho)r$.

\section{Implementation Details}
\label{app:implementation_details}

\subsection{Models and Training Datasets}
\label{app:models_training_datasets}

\paragraph{UltraFeedback experiments.}
For the UltraFeedback experiments, we follow the SimPO~\citep{meng2024simpo} training setup. We first perform supervised fine-tuning of the Meta-Llama-3-8B\footnote{\url{https://huggingface.co/meta-llama/Meta-Llama-3-8B}; Llama 3 Community License.} and Qwen3-8B-Base\footnote{\url{https://huggingface.co/Qwen/Qwen3-8B-Base}; Apache 2.0 License.} models on UltraChat-200k\footnote{\url{https://huggingface.co/datasets/HuggingFaceH4/ultrachat_200k}; MIT License.}~\citep{ding2023enhancing}; the resulting SFT checkpoints are used as the policy initializations and reference models for preference optimization on the UltraFeedback binarized\footnote{\url{https://huggingface.co/datasets/HuggingFaceH4/ultrafeedback_binarized}; MIT License.}~\citep{cui2023ultrafeedback} dataset. We also follow the SimPO protocol for common training settings, including the learning rate, scheduler, and optimizer, and keep them fixed across experiments. Table~\ref{tab:ultrafeedback_lscdpo_hparams} reports these fixed training settings and the LSC-DPO control parameters, with bold entries denoting the main settings used in Table~\ref{tab:main_eval}.
\begin{table}[H]
	\centering
	\small
	\setlength{\tabcolsep}{6.5pt}
	\renewcommand{\arraystretch}{1.0}
	\caption{
		Training configuration of LSC-DPO on UltraFeedback.
	}
	\label{tab:ultrafeedback_lscdpo_hparams}
	
	\begin{tabular}{lcc}
		\toprule
		\textbf{Configuration} & \textbf{Llama3-8B} & \textbf{Qwen3-8B} \\
		\midrule
		Model
		& meta-llama/Meta-Llama-3-8B
		& Qwen/Qwen3-8B-Base \\
		Optimizer
		& AdamW
		& AdamW \\
		Epoch
		& 1
		& 1 \\
		Batch size
		& 128
		& 128 \\
		Learning rate
		& 5e-7
		& 5e-7 \\
		Scheduler
		& cosine
		& cosine \\
		Warmup ratio
		& 0.1
		& 0.1 \\
		Weight decay
		& 0
		& 0 \\
		Precision
		& BF16
		& BF16 \\
		$\beta_0$
		& 0.01
		& 0.01 \\
		$\eta$
		& 0.1
		& 0.1 \\
		
		$z^*$
		& \begin{tabular}{c}
			$\{0.20,0.30,0.35,\mathbf{0.40},$ \\
			$0.45,0.50,0.60,0.70\}$
		\end{tabular}
		& \begin{tabular}{c}
			$\{0.20,0.30,\mathbf{0.35},0.40,$ \\
			$0.45,0.50,0.60,0.70\}$
		\end{tabular} \\
		
		$\tau_{\mathrm{start}}$
		& $\{0.40,0.43,\mathbf{0.45}\}$
		& $\{0.40,0.43,\mathbf{0.45}\}$ \\
		Device
		& 4$\times$H200 GPUs
		& 4$\times$H200 GPUs \\
		\bottomrule
	\end{tabular}
\end{table}

\paragraph{Anthropic-HH experiments.}
For the Anthropic-HH~\citep{bai2022training}\footnote{\url{https://huggingface.co/datasets/Anthropic/hh-rlhf}; MIT License.} experiments, we use both the harmless-base and helpful-base subsets. We follow DPO~\citep{rafailov2023direct} and $\beta$-DPO~\citep{wu2024beta} for the batch size (64) and single epoch, $\varepsilon$-DPO~\citep{lee2025kl} for the optimizer, scheduler and warmup, and keep the learning rate of $5\times10^{-7}$ shared with our UltraFeedback runs. For method-specific hyperparameters of $\beta$-DPO and $\varepsilon$-DPO, we use the recommended values reported in the respective papers. Tables~\ref{tab:hh_sft_config} and~\ref{tab:hh_pref_config} summarize the SFT-stage and preference-optimization-stage configurations, respectively.
\begin{table}[H]
	\centering
	\small
	\setlength{\tabcolsep}{8pt}
	\caption{
		SFT training configuration for Anthropic-HH.
	}
	\label{tab:hh_sft_config}
	\begin{tabular}{lcc}
		\toprule
		\textbf{Configuration} & \textbf{Llama3-8B} & \textbf{Qwen3-8B} \\
		\midrule
		Model & meta-llama/Meta-Llama-3-8B & Qwen/Qwen3-8B-Base \\
		Epochs & 1 & 1 \\
		Batch size & 64 & 64 \\
		Scheduler & cosine & cosine \\
		Warmup ratio & 0.1 & 0.1 \\
		Weight decay & 0 & 0 \\
		Learning rate & 2e-5 & 2e-5 \\
		Precision & BF16 & BF16 \\
		Device & 4$\times$H200 GPUs & 4$\times$H200 GPUs \\
		\bottomrule
	\end{tabular}
\end{table}
\begin{table}[H]
	\centering
	\small
	\setlength{\tabcolsep}{8pt}
	\caption{
		Training configurations of LSC-DPO on Anthropic-HH.
	}
	\label{tab:hh_pref_config}
	\begin{tabular}{lcc}
		\toprule
		\textbf{Configuration} & \textbf{Llama3-8B} & \textbf{Qwen3-8B} \\
		\midrule
		Model & meta-llama/Meta-Llama-3-8B & Qwen/Qwen3-8B-Base \\
		Epochs & 1 & 1 \\
		Batch size & 64 & 64 \\
		Scheduler & cosine & cosine \\
		Warmup ratio & 0.1 & 0.1 \\
		Weight decay & 0 & 0 \\
		Learning rate & 5e-7 & 5e-7 \\
		Precision & BF16 & BF16 \\
		$\beta_0$ & 0.1 & 0.1 \\
		$\tau_{\mathrm{start}}$ 
		& $\{0.40,0.43,\textbf{0.45},0.48,0.50\}$ 
		& $\{0.40,0.43,\textbf{0.45},0.48,0.50\}$ \\
		$z^*$ 
		& $\{0.35,\textbf{0.40},0.45,0.60,0.80,0.85\}$ 
		& $\{0.35,\textbf{0.40},0.45,0.60,0.80,0.85\}$ \\
		$\eta$ & \{0.01, 0.05, 0.1, 0.3, 0.5, 1, 5, 8\} & \{0.01, 0.05, 0.1, 0.3, 0.5, 1, 5, 8\} \\
		Device & 4$\times$H200 GPUs & 4$\times$H200 GPUs \\
		\bottomrule
	\end{tabular}
\end{table}

\paragraph{Computation environment.}
All training experiments in this paper are based on the Hugging Face alignment-handbook\footnote{\url{https://github.com/huggingface/alignment-handbook}; Apache 2.0 License.} and TRL\footnote{\url{https://github.com/huggingface/trl}; Apache 2.0 License.}. In our setup with 4 H200 GPUs, each LSC-DPO training run on UltraFeedback completes within two hours, while each run on the Anthropic-HH helpful-base or harmless-base subset completes within one hour.

\subsection{Evaluation Benchmarks and Baselines}
\label{app:evaluation_benchmarks_baselines}

\paragraph{UltraFeedback evaluation.}
For the UltraFeedback experiments, we follow the SimPO open-alignment evaluation protocol~\citep{meng2024simpo} and the official benchmark settings of AlpacaEval~2~\citep{dubois2024length} and MT-Bench~\citep{zheng2023judging}. 
We compare LSC-DPO with SFT, DPO~\citep{rafailov2023direct}, $\beta$-DPO~\citep{wu2024beta}, $\varepsilon$-DPO~\citep{lee2025kl}, R-DPO~\citep{park2024disentangling}, IPO~\citep{azar2024general}, KTO~\citep{ethayarajh2024kto}, CPO~\citep{xu2024contrastive}, ORPO~\citep{hong2024orpo}, SLiC-HF~\citep{zhao2023slic}, and SimPO~\citep{meng2024simpo}.
All methods are evaluated with the same generation settings and evaluation pipeline to ensure a fair comparison. 
On AlpacaEval~2, we report both the raw win rate and the length-controlled win rate, with the length-controlled metric used as the primary metric because preference optimization can increase response length. 
On MT-Bench, we report the average score over both turns. 

\paragraph{Anthropic-HH evaluation.}
For the Anthropic-HH experiments, we follow the evaluation protocol used in DPO~\citep{rafailov2023direct} and $\beta$-DPO~\citep{wu2024beta}. 
We evaluate harmless-base and helpful-base separately by comparing each model response with the HH chosen response, using \texttt{gpt-4-0613} as the judge with randomized response order. We report the win rate of the model response. 
All methods use the same decoding configuration, with temperature $0.7$ and top-$p$ sampling ($p=0.9$), to ensure a fair comparison.

\paragraph{Control-parameter selection.}
We selected the LSC-DPO control parameters using a separate Anthropic-HH development subset before the main AlpacaEval~2 evaluation. This preliminary study identified a coarse candidate region around $z^*\in[0.35,0.45]$ and $\tau_{\mathrm{start}}\in[0.40,0.45]$. The AlpacaEval~2 results in Appendix~\ref{sec:zstar_sensitivity} are therefore reported as a target-sensitivity analysis rather than as an unrestricted benchmark-specific hyperparameter search. After the target and training settings were fixed, the same trained checkpoints were evaluated on MT-Bench without additional tuning.

\paragraph{Arena-Hard reproducibility considerations.}
We also considered Arena-Hard~\citep{li2024crowdsourced} as an additional open-ended benchmark, but do not include it in the main evaluation because prior Arena-Hard v0.1 results for SimPO~\citep{meng2024simpo} and $\varepsilon$-DPO~\citep{lee2025kl} used \texttt{gpt-4-1106-preview} as the judge. OpenAI has since retired this model, so the original protocol cannot be reproduced in our evaluation environment. To examine how the released checkpoints behave under currently available judging setups, we re-evaluated SimPO's released Llama3-8B-base preference-optimization checkpoints under two settings: Arena-Hard v0.1 judged with \texttt{gpt-4.1}, and Arena-Hard v2.0 hard-prompt judged with \texttt{gpt-4.1}. Table~\ref{tab:arena_hard_reeval} shows that all evaluated preference-optimization methods score below the SFT baseline under these updated settings, in contrast to their behavior on AlpacaEval~2 and MT-Bench. The relative ordering also differs substantially from prior Arena-Hard reports, suggesting that results obtained under the updated judging setups may not be directly comparable to those reported under the original protocol. We therefore use AlpacaEval~2 and MT-Bench as the main evaluations.
\begin{table}[H]
	\centering
	\footnotesize
	\setlength{\tabcolsep}{4pt}
	\caption{
		Re-evaluation of the released Llama3-8B-base preference-optimization
		checkpoints from the SimPO study on Arena-Hard.
		Panel (a) reports Arena-Hard v0.1 results using \texttt{gpt-4.1};
		Panel (b) reports Arena-Hard v2.0 hard-prompt results using
		\texttt{gpt-4.1}. The original v0.1 judge,
		\texttt{gpt-4-1106-preview}, is no longer available in our evaluation
		environment.
	}
	\label{tab:arena_hard_reeval}
	
	\begin{minipage}{0.485\textwidth}
		\centering
		\begin{tabular}{lcc}
			\toprule
			Model & Score (\%) & 95\% CI (\%) \\
			\midrule
			\texttt{gpt-4-0314} (ref.) & 50.0 & $(-0.0/+0.0)$ \\
			SFT           & 26.4 & $(-1.9/+1.7)$ \\
			SFT + CPO     & 22.4 & $(-1.3/+1.6)$ \\
			SFT + SimPO   & 20.1 & $(-1.7/+2.1)$ \\
			SFT + DPO     & 16.3 & $(-1.4/+1.6)$ \\
			SFT + R-DPO   & 14.5 & $(-1.3/+1.3)$ \\
			SFT + IPO     & 13.4 & $(-1.3/+1.5)$ \\
			SFT + ORPO    & 12.4 & $(-1.2/+1.2)$ \\
			SFT + KTO     & 10.3 & $(-1.1/+1.1)$ \\
			SFT + SLiC-HF & 7.7  & $(-0.9/+0.8)$ \\
			\bottomrule
		\end{tabular}
		
		\vspace{0.35em}
		{\small (a) Arena-Hard v0.1, judge: \texttt{gpt-4.1}.}
	\end{minipage}
	\hfill
	\begin{minipage}{0.485\textwidth}
		\centering
		\begin{tabular}{lcc}
			\toprule
			Model & Score (\%) & 95\% CI (\%) \\
			\midrule
			\texttt{o3-mini} (ref.) & 50.0 & $(-0.0/+0.0)$ \\
			SFT           & 9.2  & $(-0.8/+0.8)$ \\
			SFT + CPO     & 7.4  & $(-0.7/+0.9)$ \\
			SFT + DPO     & 2.3  & $(-0.4/+0.4)$ \\
			SFT + SimPO   & 1.8  & $(-0.4/+0.4)$ \\
			SFT + IPO     & 1.4  & $(-0.3/+0.4)$ \\
			SFT + R-DPO   & 1.4  & $(-0.3/+0.3)$ \\
			SFT + ORPO    & 1.1  & $(-0.4/+0.4)$ \\
			SFT + KTO     & 0.7  & $(-0.2/+0.3)$ \\
			SFT + SLiC-HF & 0.5  & $(-0.2/+0.2)$ \\
			\bottomrule
		\end{tabular}
		
		\vspace{0.35em}
		{\small (b) Arena-Hard v2.0 hard-prompt, judge: \texttt{gpt-4.1}.}
	\end{minipage}
\end{table}

\section{Additional Learning-Signal Trajectories on Anthropic-HH}
\label{app:additional_dynamics}
This appendix provides additional learning-signal trajectories for the Anthropic-HH analysis in Section~\ref{learning_signal_analysis}. We first report the raw per-step trajectories corresponding to the EMA-smoothed Llama3-8B results, and then provide the corresponding Qwen3-8B results as a cross-model comparison.

\subsection{Raw Learning-Signal Trajectories on Llama3-8B}
Figures~\ref{fig:harmless_llama_raw} and~\ref{fig:helpful_llama_raw} show the raw per-step trajectories together with their EMA-smoothed trends for Llama3-8B. Compared with the smoothed curves reported in Section~\ref{learning_signal_analysis}, the raw trajectories make the short-term fluctuations in the learning signal more visible, especially for $\beta$-DPO. This behavior suggests that dynamically adjusting $\beta$ based on margin statistics can introduce large short-term fluctuations in the learning signal. In contrast, LSC-DPO maintains a more stable smoothed trajectory while avoiding the persistent low-signal regime observed under DPO, particularly on the helpful-base subset.
\begin{figure}[H]
	\centering
	\begin{subfigure}{0.48\textwidth}
		\centering
		\includegraphics[width=\linewidth]{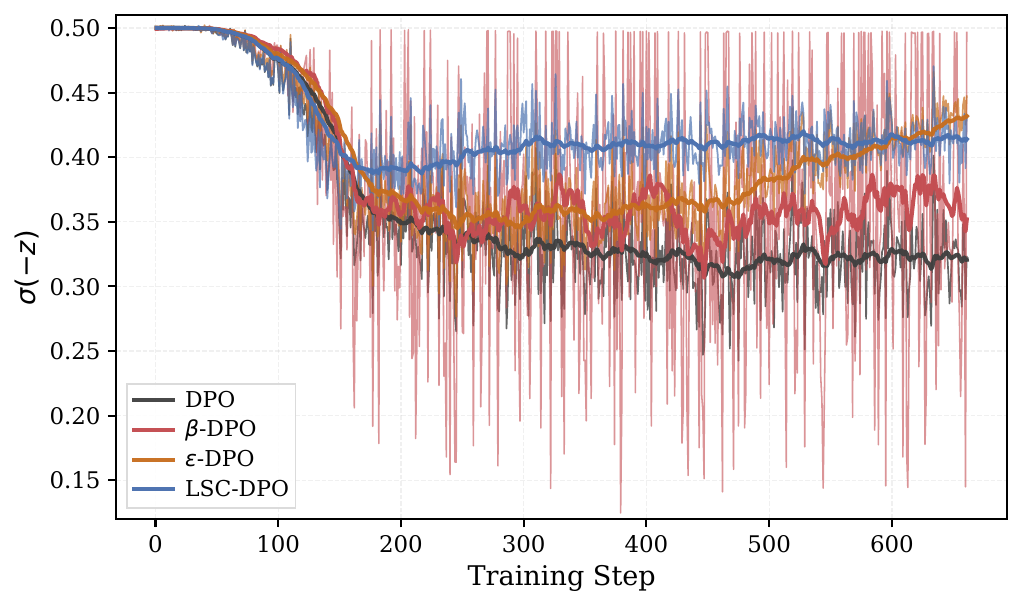}
		\caption{harmless-base}
		\label{fig:harmless_llama_raw}
	\end{subfigure}
	\hfill
	\begin{subfigure}{0.48\textwidth}
		\centering
		\includegraphics[width=\linewidth]{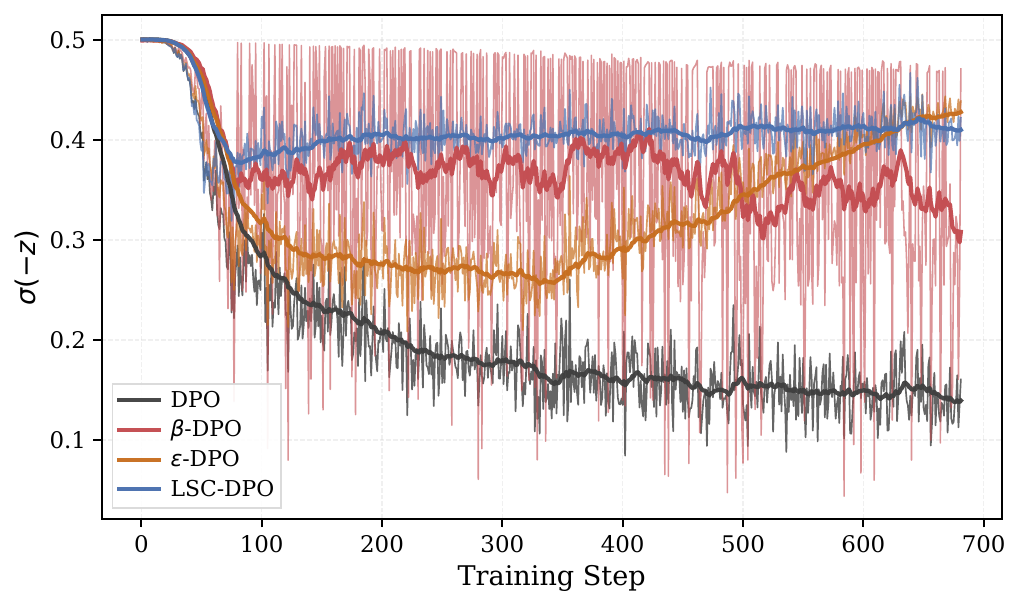}
		\caption{helpful-base}
		\label{fig:helpful_llama_raw}
	\end{subfigure}
	
	\caption{
		Raw and EMA-smoothed trajectories of the learning signal $\sigma(-z)$ on Anthropic-HH with Llama3-8B. Faint curves show raw per-step values, while bold curves show EMA-smoothed trends.
	}
	\label{fig:app_llama_qt_raw}
\end{figure}

\subsection{Learning-Signal Trajectories on Qwen3-8B}
\label{app:raw_qt}
Figures~\ref{fig:app_qwen_qt_raw} and~\ref{fig:app_qwen_qt_ema} report the corresponding learning-signal trajectories for Qwen3-8B. The same qualitative pattern observed with Llama3-8B also appears for Qwen3-8B. DPO drives the learning signal toward a lower regime as training proceeds, while $\beta$-DPO exhibits larger fluctuations. $\varepsilon$-DPO partially recovers the signal in later stages. Across both harmless-base and helpful-base, LSC-DPO maintains a higher and more stable learning signal, providing additional evidence that the proposed control rule is not specific to a single model family.
\begin{figure}[H]
	\centering
	\begin{subfigure}{0.48\textwidth}
		\centering
		\includegraphics[width=\linewidth]{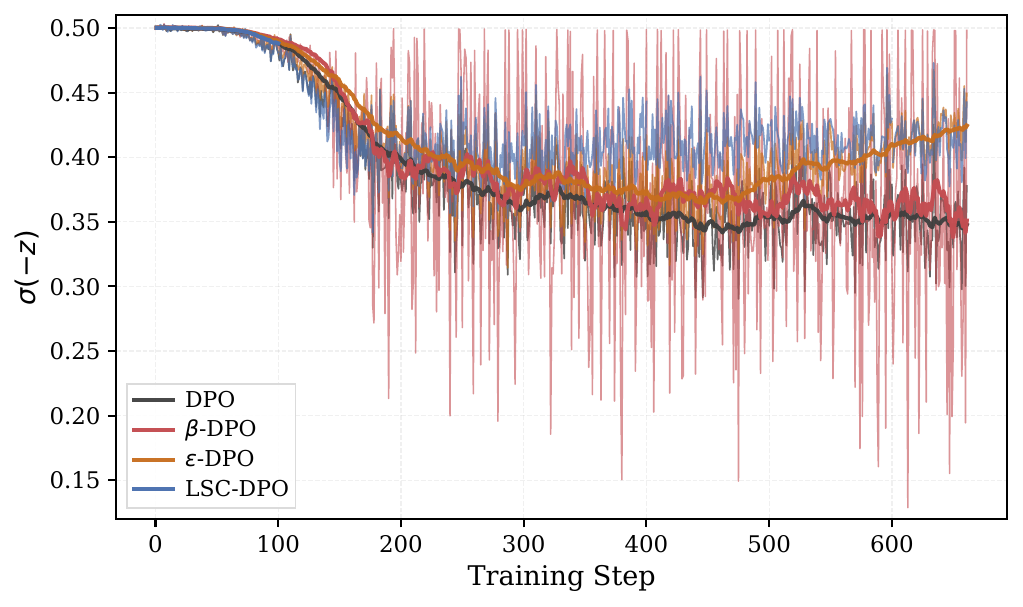}
		\caption{harmless-base}
		\label{fig:qwen_raw_harmless}
	\end{subfigure}
	\hfill
	\begin{subfigure}{0.48\textwidth}
		\centering
		\includegraphics[width=\linewidth]{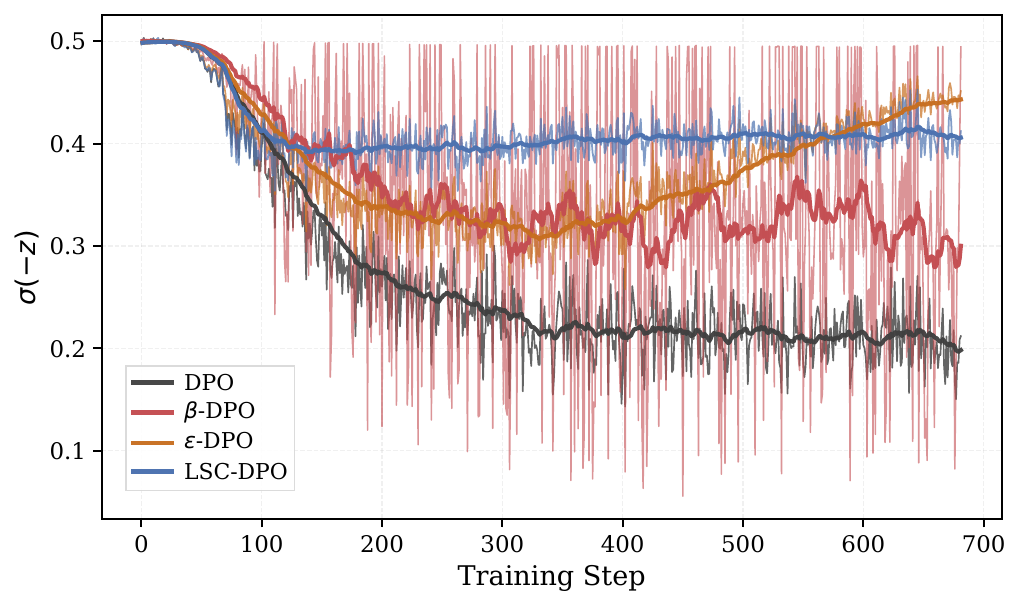}
		\caption{helpful-base}
		\label{fig:qwen_raw_helpful}
	\end{subfigure}
	
	\caption{
		Raw and EMA-smoothed trajectories of the learning signal $\sigma(-z)$
		on Anthropic-HH with Qwen3-8B.
		Faint curves denote raw per-step values, while bold curves denote EMA-smoothed trends.
	}
	\label{fig:app_qwen_qt_raw}
\end{figure}
\begin{figure}[H]
	\centering
	\begin{subfigure}{0.48\textwidth}
		\centering
		\includegraphics[width=\linewidth]{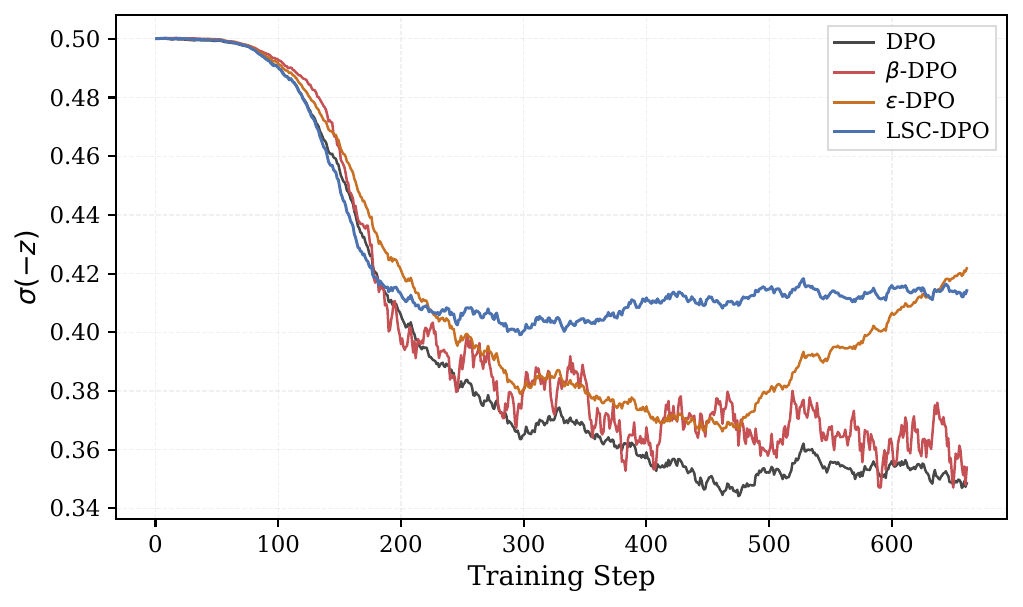}
		\caption{harmless-base}
		\label{fig:qwen_ema_harmless}
	\end{subfigure}
	\hfill
	\begin{subfigure}{0.48\textwidth}
		\centering
		\includegraphics[width=\linewidth]{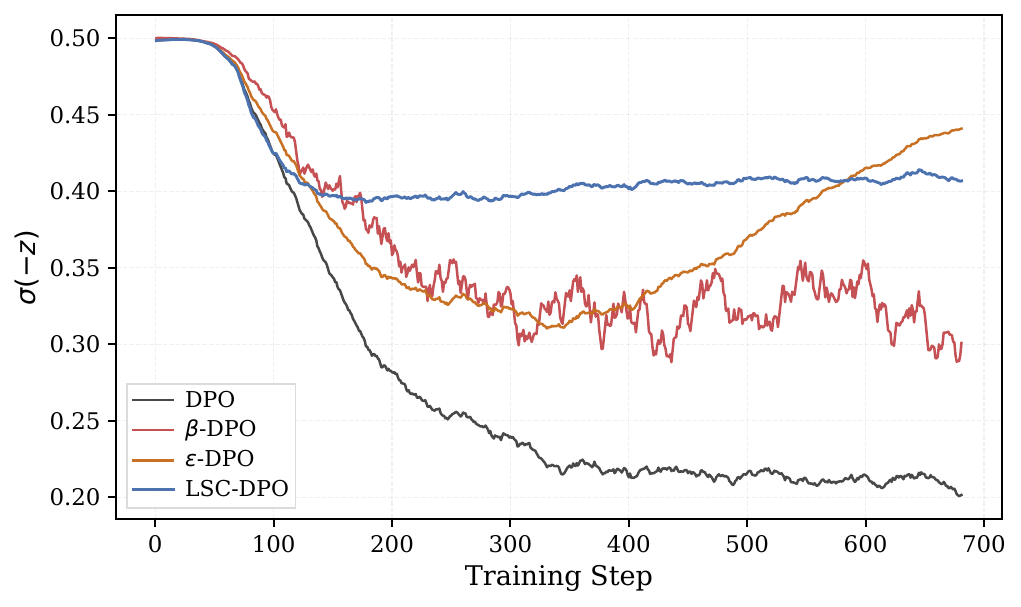}
		\caption{helpful-base}
		\label{fig:qwen_ema_helpful}
	\end{subfigure}
	
	\caption{
		EMA-smoothed trajectories of the learning signal $\sigma(-z)$ on Anthropic-HH with Qwen3-8B.
	}
	\label{fig:app_qwen_qt_ema}
\end{figure}

\section{Stability under Different Update Rates}
\label{stability_analysis}
We further examine the stability behavior of LSC-DPO by varying the control rate $\eta$ while fixing the target scaled margin to $z^*=0.4$. Under this setting, the target learning signal is $\sigma(-z^*)\approx 0.4013$, and changing $\eta$ directly changes the stability-relevant product $\eta z^*$ in Proposition~\ref{prop:local-stability}. Figures~\ref{fig:eta_qt_stability_llama} and \ref{fig:eta_qt_stability_qwen} show the EMA-smoothed trajectories of the learning signal $\sigma(-z)$ on Anthropic-HH subsets. Very small values of $\eta$ react slowly to deviations from the target signal. For example, when $\eta=0.01$, the signal often drops below the target level and recovers only gradually. In contrast, moderate values such as $\eta=0.3$, $0.5$, and $1.0$ provide more responsive correction and keep the signal close to the target after the initial adjustment phase across both model families and subsets.

With $z^*=0.4$, the local stability condition $0<\eta z^*<2$ reduces to $\eta<5$. At $\eta=5$, which lies exactly on the boundary with $|1-\eta z^*|=1$, the local contraction is lost and the learning-signal trajectory exhibits substantially larger fluctuations. In contrast, $\eta=8$ violates the stability condition and fails to
reliably track the target signal. These results support the feedback-control
interpretation of LSC-DPO: appropriately bounded control rates maintain the learning signal near the target regime, whereas overly large rates can destabilize the signal dynamics.
\begin{figure}[H]
	\centering
	
	\begin{subfigure}{0.48\linewidth}
		\centering
		\includegraphics[width=\linewidth]{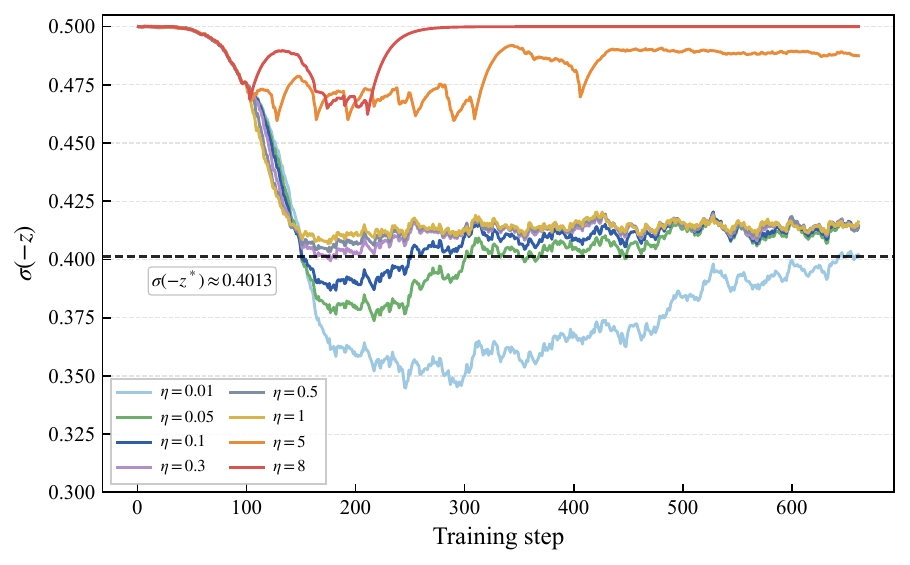}
		\caption{harmless-base}
	\end{subfigure}
	\hfill
	\begin{subfigure}{0.48\linewidth}
		\centering
		\includegraphics[width=\linewidth]{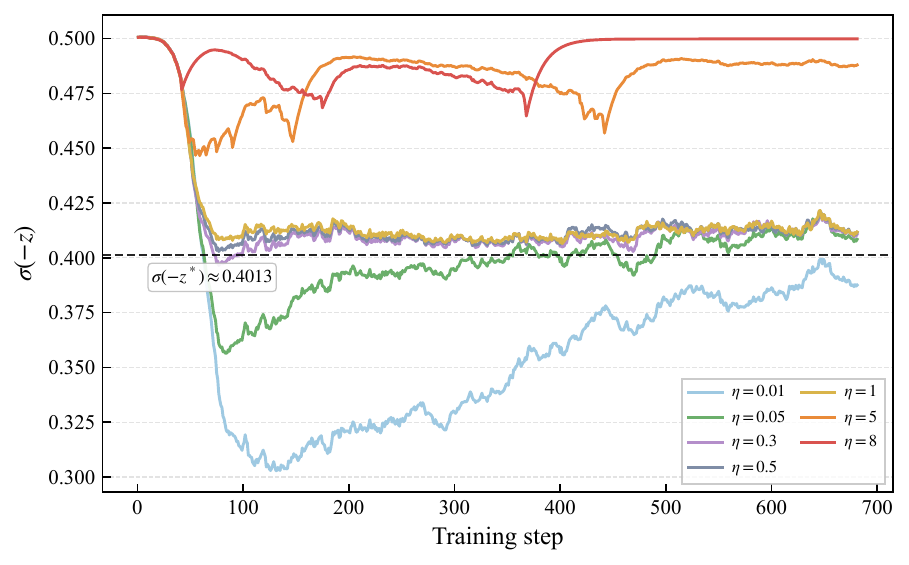}
		\caption{helpful-base}
	\end{subfigure}
	
	\caption{
		Learning signal $\sigma(-z)$ trajectories of LSC-DPO under different
		control rates $\eta$ with Llama3-8B on Anthropic-HH subsets.
		The dashed line marks the target signal
		$\sigma(-z^*)\approx 0.4013$.
	}
	\label{fig:eta_qt_stability_llama}
	
	\setcounter{subfigure}{0}
	
	\begin{subfigure}{0.48\linewidth}
		\centering
		\includegraphics[width=\linewidth]{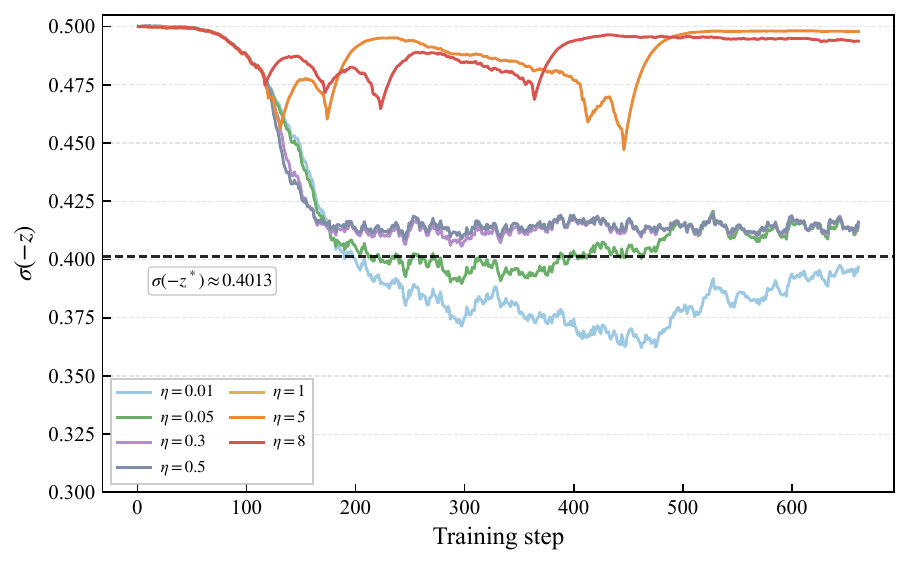}
		\caption{harmless-base}
	\end{subfigure}
	\hfill
	\begin{subfigure}{0.48\linewidth}
		\centering
		\includegraphics[width=\linewidth]{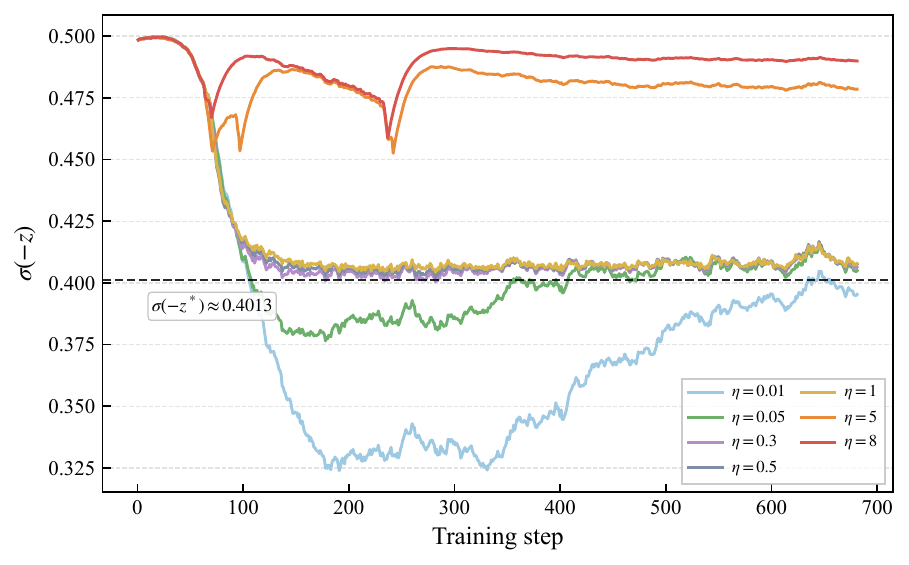}
		\caption{helpful-base}
	\end{subfigure}
	
	\caption{
		Learning signal $\sigma(-z)$ trajectories of LSC-DPO under different
		control rates $\eta$ with Qwen3-8B on Anthropic-HH subsets.
		The dashed line marks the target signal
		$\sigma(-z^*)\approx 0.4013$.
	}
	\label{fig:eta_qt_stability_qwen}
	
\end{figure}

\section{Additional Mechanistic Validation of Learning-Signal Control}
\label{app:optimization_dynamics}
The mechanistic experiments in Section~\ref{sec:mechanistic_validation}
use the same Llama3-8B SFT checkpoint and UltraFeedback training configuration
as the main experiments. Unless otherwise specified, all training and evaluation settings are kept fixed across the compared methods.

\subsection{Learning Signal and Realized Optimization Dynamics}
\label{app:optimization_dynamics_loss}
We compute the curvature factor as $c_t=\sigma(-z_t)(1-\sigma(-z_t))$ and report the gradient norm measured before gradient clipping. LSC-DPO control is activated at step 84. For visualization, solid curves show EMA-smoothed trajectories, whereas all
phase-averaged statistics in Table~\ref{tab:mechanistic_diagnostics} are computed from the original unsmoothed measurements.
\begin{figure}[H]
	\centering
	\includegraphics[width=0.8\textwidth]{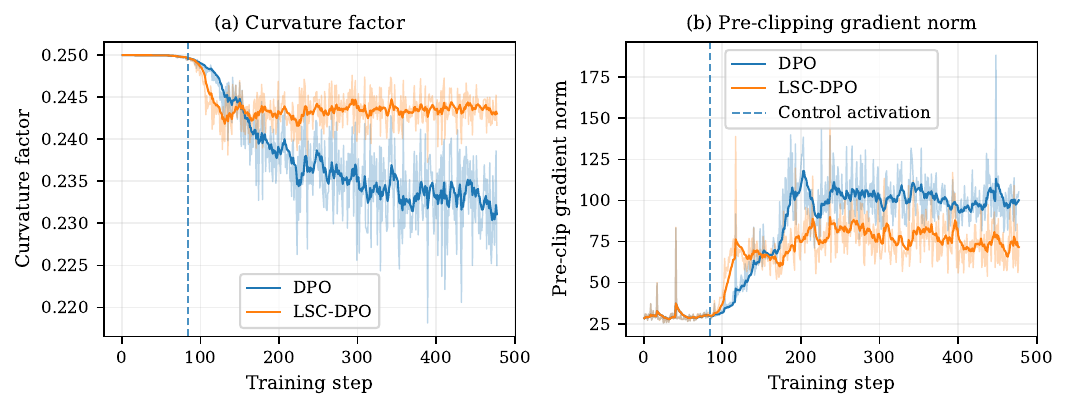}
	\caption{
		Optimization diagnostics for DPO and LSC-DPO throughout training.
		Left: curvature factor. Right: pre-clipping gradient norm.
		The vertical dashed line marks LSC-DPO control activation at step 84.
	}
	\label{fig:optimization_dynamics}
\end{figure}
Before control activation, DPO and LSC-DPO follow nearly identical trajectories.
After activation, LSC-DPO consistently maintains a larger curvature factor,
whereas DPO continues moving toward a lower-curvature regime. The pre-clipping
gradient trajectories also separate after control activation. These full
trajectories confirm that the phase-averaged differences reported in
Table~\ref{tab:mechanistic_diagnostics} persist throughout the controlled stage
rather than arising from a small number of training steps.

\subsection{Feedback versus Predetermined $\beta$ Schedules}
\label{app:beta_schedule_comparison}
For the predetermined-schedule baselines in Table~\ref{tab:beta_schedule_comparison}, we vary $\beta$ according to linear, cosine, and exponential schedules fixed before training. Let $t=0,\ldots,T-1$ index the $\beta$ updates and $f_t=t/(T-1)$. All schedules start from $\beta_0$ and end at $\beta_{\mathrm{end}}$, and are defined as
\begin{align}
	\beta_t^{\mathrm{lin}}
	&=
	\beta_0 + f_t(\beta_{\mathrm{end}}-\beta_0), \\
	\beta_t^{\mathrm{cos}}
	&=
	\beta_0
	+\frac{1}{2}\left(1-\cos(\pi f_t)\right)
	(\beta_{\mathrm{end}}-\beta_0), \\
	\beta_t^{\mathrm{exp}}
	&=
	\exp\left(
	\log\beta_0
	+
	f_t(\log\beta_{\mathrm{end}}-\log\beta_0)
	\right).
\end{align}
All predetermined schedules use the same initial and final coefficient values as LSC-DPO, differ only in how $\beta$ evolves between these endpoints, and update the coefficient once per gathered microbatch.

\section{Additional Results on Initialization Robustness and Signal Compensation}
\label{app:signal_compensation}
The experiments in Section~\ref{sec:init_beta} use the same Llama3-8B, UltraFeedback, training, and evaluation settings as those in Section~\ref{sec:mechanistic_validation}; only $\beta_0$ and, where applicable, the compensation target are varied.
\subsection{Initialization Sensitivity: Full LC/WR Results}
\label{app:init_full_results}
Table~\ref{tab:init_beta_full} reports the complete AlpacaEval~2 results for
the initialization-sensitivity study, including both length-controlled (LC)
win rate and standard win rate (WR). The $\beta_0=0.01$ configuration
corresponds to the reference setting used in the main experiments.
\begin{table}[H]
	\caption{
		Full initialization-sensitivity results on AlpacaEval 2.
		We report length-controlled win rate (LC) and standard win rate (WR),
		both in percentage points.
		$\dagger$ denotes the reference setting.
	}
	\label{tab:init_beta_full}
	\centering
	\scriptsize
	\setlength{\tabcolsep}{2.7pt}
	\renewcommand{\arraystretch}{0.95}
	\setlength{\heavyrulewidth}{0.9pt}
	
	\begin{tabular*}{\textwidth}
		{@{\extracolsep{\fill}}lcccccccccc@{}}
		\toprule
		&
		\multicolumn{2}{c}{$\beta_0=0.01^\dagger$} &
		\multicolumn{2}{c}{$\beta_0=0.05$} &
		\multicolumn{2}{c}{$\beta_0=0.10$} &
		\multicolumn{2}{c}{$\beta_0=0.30$} &
		\multicolumn{2}{c}{$\beta_0=0.80$} \\
		\cmidrule(lr){2-3}
		\cmidrule(lr){4-5}
		\cmidrule(lr){6-7}
		\cmidrule(lr){8-9}
		\cmidrule(lr){10-11}
		
		Method
		& LC & WR
		& LC & WR
		& LC & WR
		& LC & WR
		& LC & WR \\
		\midrule
		
		DPO
		& 19.2 & 17.7
		& 10.4 & 7.7
		& 8.7  & 5.8
		& 8.3  & 4.8
		& 6.6  & 3.9 \\
		
		$\beta$-DPO
		& 16.6 & 16.5
		& 10.5 & 7.8
		& 10.8 & 7.8
		& 7.3  & 4.7
		& 6.9  & 3.8 \\
		
		$\varepsilon$-DPO
		& 19.6 & 17.6
		& 15.6 & 13.2
		& 12.9 & 9.8
		& 8.8  & 5.9
		& 6.9  & 4.5 \\
		
		\textbf{LSC-DPO}
		& \textbf{22.7} & \textbf{21.1}
		& \textbf{19.6} & \textbf{18.3}
		& \textbf{19.1} & \textbf{17.2}
		& \textbf{15.3} & \textbf{14.1}
		& \textbf{15.2} & \textbf{13.4} \\
		
		\bottomrule
	\end{tabular*}
\end{table}

\subsection{Additional Signal-Budget Compensation Results}
\label{app:signal_compensation_details}
We provide additional results for the signal-budget compensation experiments
described in Section~\ref{sec:init_beta}. The $\beta_0=0.01$ run is used as
the anchor configuration, and the baseline LSC-DPO target is $z^*=0.400$.
For each non-anchor initialization, the full-compensation target is determined
by the signal-budget rule in Eq.~\ref{eq:signal_compensation}. For
$\beta_0=0.10$ and $\beta_0=0.80$, we additionally evaluate a half-compensation
setting, obtained by applying half of the target shift from the baseline
$z^*=0.400$ toward the corresponding full-compensation target. All other
training and evaluation settings are kept unchanged.
\begin{table}[H]
	\caption{
		Detailed signal-budget compensation results on AlpacaEval~2.
		LC and WR denote length-controlled and standard win rates, respectively.
	}
	\label{tab:compensation_full_results}
	\centering
	\small
	\setlength{\heavyrulewidth}{0.9pt}
	\renewcommand{\arraystretch}{0.95}
	
	\begin{tabular*}{0.82\textwidth}
		{@{\extracolsep{\fill}}lccc@{}}
		\toprule
		Setting & $z^*$ & LC (\%) & WR (\%) \\
		\midrule
		$\beta_0=0.05$            & 0.359 & 21.5 & 20.1 \\
		$\beta_0=0.10$ (half)     & 0.374 & 20.7 & 18.9 \\
		$\beta_0=0.10$ (full)     & 0.348 & 21.1 & 19.5 \\
		$\beta_0=0.30$            & 0.333 & 20.4 & 19.4 \\
		$\beta_0=0.80$ (half)     & 0.362 & 17.4 & 15.8 \\
		$\beta_0=0.80$ (full)     & 0.324 & 19.7 & 18.3 \\
		\bottomrule
	\end{tabular*}
\end{table}
Compared with the corresponding uncompensated runs reported in
Table~\ref{tab:init_beta_full}, full compensation improves both LC and WR
across all tested non-anchor initializations. For the
two initializations where both half and full compensation are evaluated,
half compensation yields intermediate performance, while full compensation
produces larger improvements. This pattern is consistent with the intended
effect of progressively correcting the initialization-induced learning-signal
deficit.

To directly examine whether the target adjustment produces the intended
signal-level correction, we compare the cumulative learning-signal recovery
under half and full compensation. The required recovery is defined relative
to the anchor run and represents the cumulative signal deficit that must be
recovered to match its signal budget.
\begin{figure}[H]
	\centering
	\includegraphics[width=0.95\linewidth]{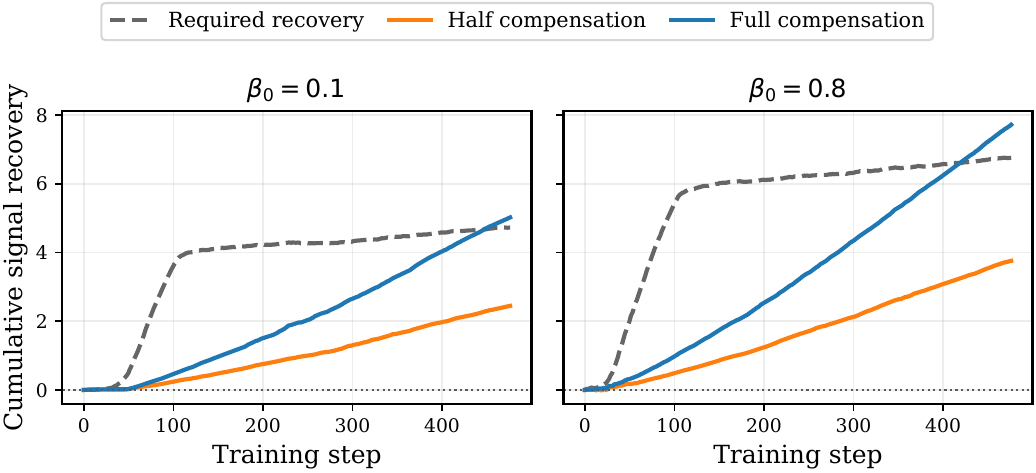}
	\caption{
		Cumulative learning-signal recovery under half and full signal-budget
		compensation for $\beta_0=0.10$ and $\beta_0=0.80$. The dashed curves denote
		the recovery required to match the anchor run ($\beta_0=0.01$), while the
		solid curves show the recovery achieved by half and full compensation.
		Full compensation closely matches or slightly exceeds the required recovery,
		whereas half compensation recovers approximately half of the deficit.
	}
	\label{fig:signal_recovery}
\end{figure}
Figure~\ref{fig:signal_recovery} shows that the realized cumulative recovery
follows the intended compensation strength. Full compensation approaches the
required cumulative recovery by the end of training, whereas half compensation
produces a substantially smaller correction. These results verify that the
target adjustment acts on the cumulative learning signal in the intended
direction. Matching the cumulative signal budget does not imply identical
optimization trajectories or policies, but provides a direct mechanism-level
check of the proposed compensation rule.

\section{Ablation Analysis of Parameters}
\label{ablation_analysis}
\subsection{Target Learning Signal}
\label{sec:zstar_sensitivity}
Table~\ref{tab:zstar_ablation} analyzes the target learning signal in LSC-DPO
on AlpacaEval~2. In our update rule, the target scaled margin $z^*$ specifies
the target signal level $\sigma(-z^*)$ and therefore determines the learning-signal regime that the method attempts to track. We also include DPO as a reference point; its reported signal is the final stabilized value observed during training rather than a controlled target.

The results indicate that stronger performance is achieved in an intermediate
target-signal region rather than at either extreme. When $z^*$ is too small,
the target signal remains high and the loss stays in a relatively high-sensitivity, high-curvature region. The empirical results show that maintaining an overly high target signal does not yield the best performance. Conversely, when $z^*$ is too large, the target signal becomes small and the sigmoid loss moves toward a flatter, lower-sensitivity region. For Llama3-8B, increasing $z^*$ from $0.40$ to $0.70$ reduces the LC win rate from $22.7$ to $19.5$; for Qwen3-8B, increasing $z^*$ from $0.35$ to $0.70$ reduces the LC win rate from $22.1$ to $13.7$.

The best results are obtained at moderate target signal levels. Llama3-8B performs best at $z^*=0.40$, corresponding to $\sigma(-z^*)=0.4013$, while Qwen3-8B performs best at $z^*=0.35$, corresponding to $\sigma(-z^*)=0.4134$. Compared with DPO, the best LSC-DPO settings improve the LC win rate from $19.2$ to $22.7$ on Llama3-8B and from $17.2$ to $22.1$ on Qwen3-8B. Overall, these results suggest that LSC-DPO benefits from an intermediate target learning signal: in our experiments, the strongest settings concentrate around $\sigma(-z^*)\approx 0.40$--$0.41$, where the loss retains meaningful local responsiveness without staying too close to the overly high-signal region or drifting into saturation.
\begin{table}[H]
	\centering
	\small
	\setlength{\tabcolsep}{5pt}
	\caption{
		Target learning signal on AlpacaEval~2.
		The DPO row reports the final stabilized signal under fixed-$\beta$ training.
		Scores are official AlpacaEval~2 point estimates; standard errors for WR/LC are
		typically $1.0$--$1.2$ percentage points in our runs.
	}
	\label{tab:zstar_ablation}
	\begin{tabular}{lcccc}
		\toprule
		Model & $z^*$ / Method & Target signal $\sigma(-z^*)$ & WR (\%) & LC (\%) \\
		\midrule
		\multirow{9}{*}{Llama3-8B}
		& 0.20 & 0.4502 & 3.8 & 4.3 \\
		& 0.30 & 0.4256 & 12.8 & 14.7 \\
		& 0.35 & 0.4134 & 20.1 & 20.9 \\
		& \textbf{0.40} & \textbf{0.4013} & \textbf{21.1} & \textbf{22.7} \\
		& 0.45 & 0.3894 & 19.7 & 21.1 \\
		& 0.50 & 0.3775 & 18.6 & 20.7 \\
		& 0.60 & 0.3543 & 18.7 & 20.2 \\
		& 0.70 & 0.3318 & 17.5 & 19.5 \\
		& DPO & 0.3429 & 17.7 & 19.2 \\
		\midrule
		\multirow{9}{*}{Qwen3-8B}
		& 0.20 & 0.4502 & 12.9 & 18.6 \\
		& 0.30 & 0.4256 & 15.2 & 21.1 \\
		& \textbf{0.35} & \textbf{0.4134} & \textbf{16.1} & \textbf{22.1} \\
		& 0.40 & 0.4013 & 13.5 & 18.9 \\
		& 0.45 & 0.3894 & 12.9 & 18.6 \\
		& 0.50 & 0.3775 & 12.3 & 17.8 \\
		& 0.60 & 0.3543 & 10.9 & 18.2 \\
		& 0.70 & 0.3318 & 8.8 & 13.7 \\
		& DPO & 0.3787 & 13.1 & 17.2 \\
		\bottomrule
	\end{tabular}
\end{table}

\subsection{Ablation of Control Parameters}
We further study the parameters of LSC-DPO on Anthropic-HH using Llama3-8B and Qwen3-8B. Figures~\ref{fig:ablation_all_llama} and \ref{fig:ablation_all_qwen} vary the target scaled margin $z^*$, the control rate $\eta$, and the start threshold $\tau_{\mathrm{start}}$.

The target scaled margin $z^*$ specifies the desired learning-signal regime
through $\sigma(-z^*)$. Moderate values of $z^*$ generally achieve stronger win rates, whereas overly large values reduce performance, consistent with the view that a small target signal moves the loss toward a lower-sensitivity regime. The control rate $\eta$ determines the responsiveness of the feedback update. Moderate values perform well, while overly large rates can degrade performance, consistent with the stability dependence on $\eta z^*$ in Proposition~\ref{prop:local-stability}. The start threshold
$\tau_{\mathrm{start}}$ determines when the dynamic update is activated.
Compared with $z^*$ and $\eta$, performance is less sensitive to $\tau_{\mathrm{start}}$ within the tested range, although moderate activation
thresholds tend to perform best.

Overall, these ablations suggest that LSC-DPO tolerates moderate variations in its control parameters, while benefiting from target margins and control rates that maintain a responsive learning-signal regime.
\begin{figure}[H]
	\centering
	\begin{subfigure}{0.32\textwidth}
		\centering
		\includegraphics[width=\linewidth]{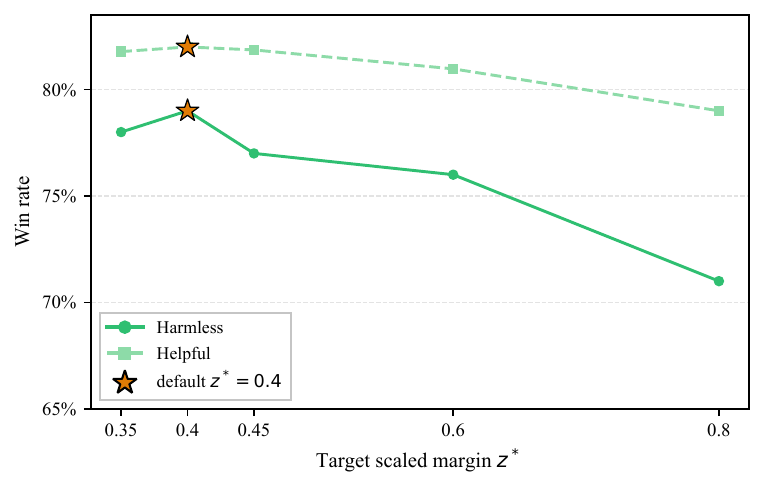}
		\caption{Target scaled margin $z^*$}
		\label{fig:ablation_zstar}
	\end{subfigure}
	\hfill
	\begin{subfigure}{0.32\textwidth}
		\centering
		\includegraphics[width=\linewidth]{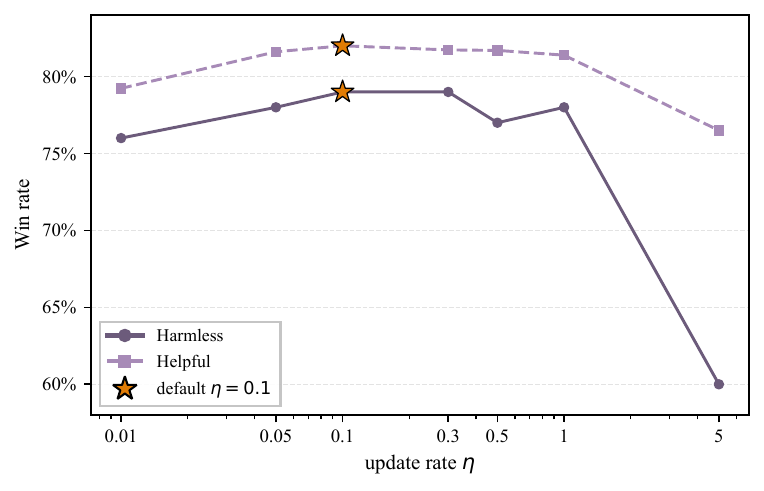}
		\caption{Control rate $\eta$}
		\label{fig:ablation_eta}
	\end{subfigure}
	\hfill
	\begin{subfigure}{0.32\textwidth}
		\centering
		\includegraphics[width=\linewidth]{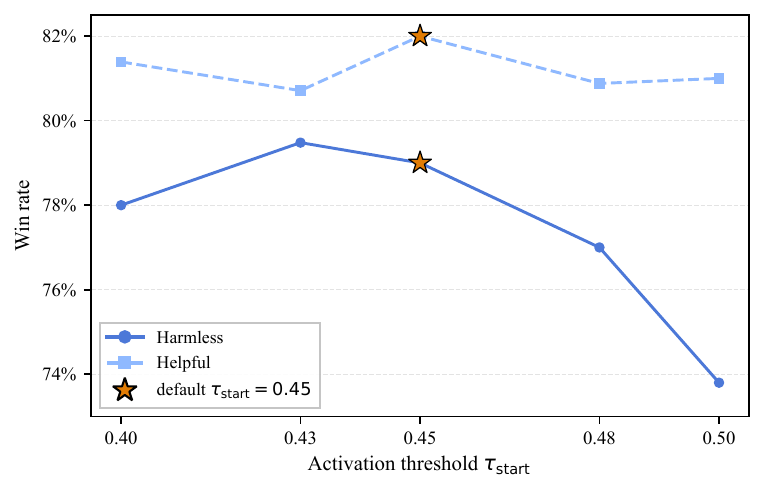}
		\caption{Start threshold $\tau_{\mathrm{start}}$}
		\label{fig:ablation_tau_target}
	\end{subfigure}
	
	\caption{
		Ablation analysis of key control parameters with Llama3-8B on Anthropic-HH subsets: (a) target scaled margin $z^*$, (b) control rate $\eta$, and (c) start threshold $\tau_{\mathrm{start}}$. The star marks the default setting used in the main experiments.
	}
	\label{fig:ablation_all_llama}
\end{figure}
\begin{figure}[H]
	\centering
	\begin{subfigure}{0.32\textwidth}
		\centering
		\includegraphics[width=\linewidth]{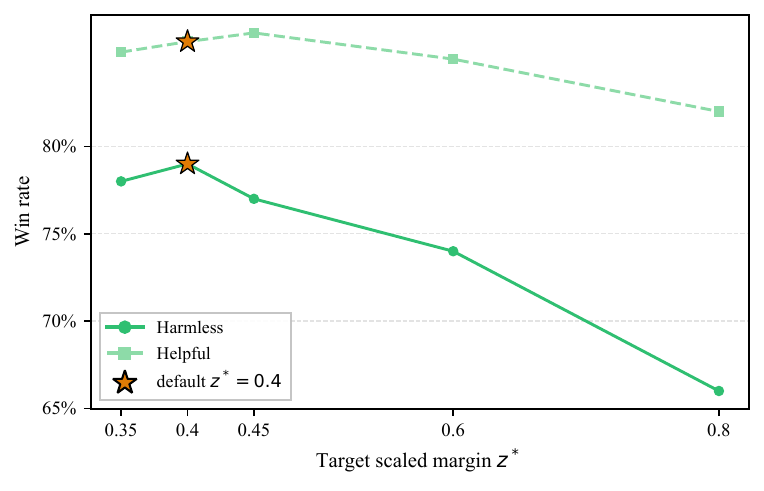}
		\caption{Target scaled margin $z^*$}
		\label{fig:ablation_zstar_qwen}
	\end{subfigure}
	\hfill
	\begin{subfigure}{0.32\textwidth}
		\centering
		\includegraphics[width=\linewidth]{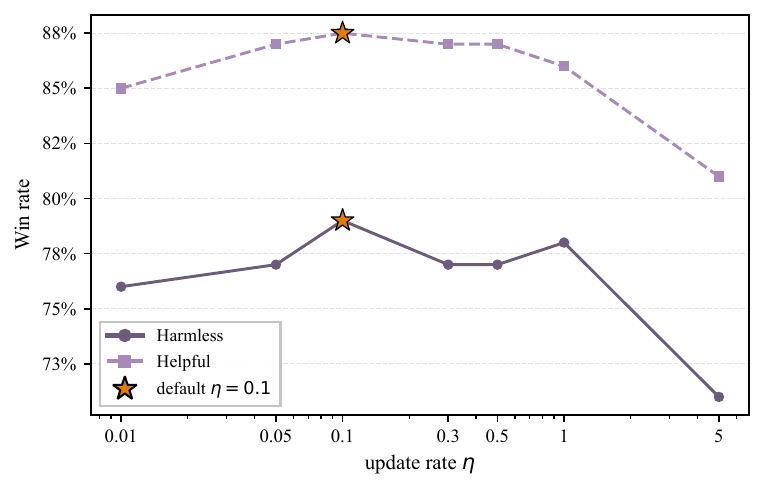}
		\caption{Control rate $\eta$}
		\label{fig:ablation_eta_qwen}
	\end{subfigure}
	\hfill
	\begin{subfigure}{0.32\textwidth}
		\centering
		\includegraphics[width=\linewidth]{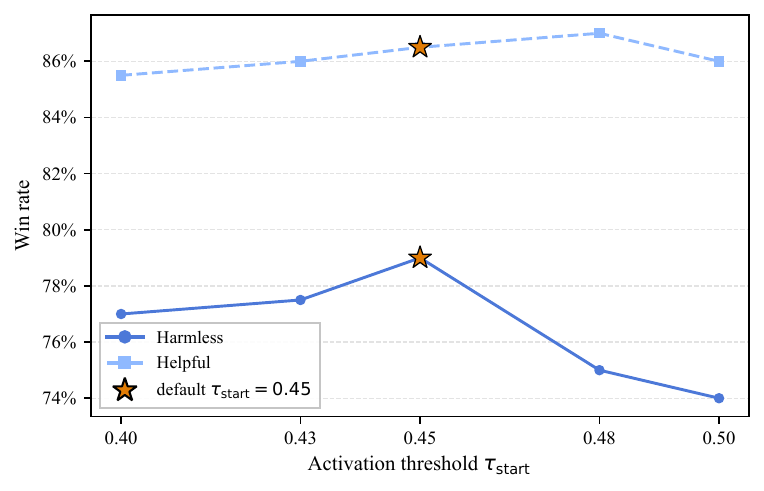}
		\caption{Start threshold $\tau_{\mathrm{start}}$}
		\label{fig:ablation_tau_target_qwen}
	\end{subfigure}
	
	\caption{
		Ablation analysis of key control parameters with Qwen3-8B on Anthropic-HH subsets: (a) target scaled margin $z^*$, (b) control rate $\eta$, and (c) start threshold $\tau_{\mathrm{start}}$. The star marks the default setting used in the main experiments.
	}
	\label{fig:ablation_all_qwen}
\end{figure}

\section{Downstream Evaluation}
\label{downstream}
Following the downstream evaluation tasks used in SimPO~\citep{meng2024simpo} and $\varepsilon$-DPO~\citep{lee2025kl}, we further evaluate whether preference optimization preserves general task capabilities beyond open-ended chatbot benchmarks. We consider MMLU~\citep{hendrycks2020measuring}, ARC-Challenge~\citep{clark2018think}, HellaSwag~\citep{zellers2019hellaswag}, TruthfulQA~\citep{lin2022truthfulqa}, WinoGrande~\citep{sakaguchi2021winogrande}, and GSM8K~\citep{cobbe2021training}, covering knowledge, reading comprehension, commonsense reasoning, truthfulness, and mathematical reasoning. Tables~\ref{tab:downstream_eval} and~\ref{tab:downstream_eval_qwen} report the results for Llama3-8B and Qwen3-8B, respectively.

Overall, LSC-DPO maintains downstream task performance at a level comparable to strong preference-optimization baselines and does not exhibit a consistent degradation relative to SFT. The effect of preference optimization varies across tasks and model families, consistent with prior observations that downstream task
changes can depend on the pretrained model, preference data, and optimization objective. Notably, on GSM8K, LSC-DPO does not show a score drop relative to the SFT initialization on either model family, suggesting that the alignment improvements do not come from a substantial loss of mathematical reasoning capability. As an additional reproducibility check, we also evaluate the released SimPO checkpoints on the same set of downstream tasks under the same evaluation protocol and obtain highly similar results. This consistency strengthens the reliability of our capability-preservation conclusion. We therefore interpret these downstream results mainly as a capability-preservation check rather than as the primary evidence for alignment improvement.

\begin{table}[H]
	\centering
	\scriptsize
	\setlength{\tabcolsep}{3pt}
	\caption{
		Downstream capability evaluation on Llama3-8B. Scores are reported as percentages. Avg. denotes the mean over the displayed task scores.
	}
	\label{tab:downstream_eval}
	\resizebox{\textwidth}{!}{
		\begin{tabular}{lccccccc}
			\toprule
			\textbf{Method} &
			\textbf{MMLU} &
			\textbf{ARC-C} &
			\textbf{HellaSwag} &
			\textbf{TruthfulQA} &
			\textbf{WinoGrande} &
			\textbf{GSM8K} &
			\textbf{Avg.} \\
			\midrule
			SFT & 62.54 & 60.24 & 81.30 & 45.50 & 74.43 & 49.20 & 62.20 \\
			DPO & 63.15 & 64.51 & 83.14 & 53.12 & 75.14 & 54.97 & 65.67 \\
			$\beta$-DPO & 62.92 & 63.99 & 83.04 & 51.37 & 75.14 & 53.45 & 64.99 \\
			$\varepsilon$-DPO & 62.98 & 64.33 & 83.29 & 52.94 & 75.14 & 53.15 & 65.30 \\
			IPO & 62.96 & 62.63 & 81.68 & 49.02 & 73.88 & 54.36 & 64.09 \\
			CPO & 62.73 & 60.32 & 81.44 & 52.97 & 75.06 & 48.22 & 63.46 \\
			KTO & 62.88 & 64.51 & 83.39 & 53.93 & 75.85 & 53.37 & 65.65 \\
			ORPO & 62.80 & 55.29 & 80.70 & 47.72 & 74.03 & 44.35 & 60.82 \\
			SLiC-HF & 62.69 & 57.85 & 80.57 & 50.82 & 74.35 & 46.47 & 62.13 \\
			R-DPO & 62.41 & 61.95 & 80.91 & 43.74 & 74.35 & 39.58 & 60.49 \\
			SimPO & 62.85 & 65.78 & 83.12 & 58.48 & 74.25 & 52.54 & 66.17 \\
			LSC-DPO & 63.02 & 65.25 & 83.34 & 52.57 & 74.82 & 53.98 & 65.50 \\
			\bottomrule
		\end{tabular}
	}
\end{table}

\begin{table}[H]
	\centering
	\scriptsize
	\setlength{\tabcolsep}{3pt}
	\caption{
		Downstream capability evaluation on Qwen3-8B. Scores are reported as percentages. Avg. denotes the mean over the displayed task scores.
	}
	\label{tab:downstream_eval_qwen}
	\resizebox{\textwidth}{!}{
		\begin{tabular}{lccccccc}
			\toprule
			\textbf{Method} &
			\textbf{MMLU} &
			\textbf{ARC-C} &
			\textbf{HellaSwag} &
			\textbf{TruthfulQA} &
			\textbf{WinoGrande} &
			\textbf{GSM8K} &
			\textbf{Avg.} \\
			\midrule
			SFT & 75.99 & 67.49 & 80.17 & 53.87 & 75.85 & 82.18 & 72.59 \\
			DPO & 76.54 & 68.69 & 81.30 & 57.39 & 76.01 & 84.46 & 74.06 \\
			$\beta$-DPO & 76.48 & 68.77 & 81.13 & 56.40 & 76.72 & 83.40 & 73.82 \\
			$\varepsilon$-DPO & 76.63 & 68.34 & 81.44 & 57.57 & 76.01 & 84.53 & 74.09 \\
			IPO & 76.48 & 67.83 & 80.92 & 56.30 & 76.09 & 84.15 & 73.63 \\
			CPO & 75.63 & 68.09 & 80.03 & 56.72 & 75.61 & 83.55 & 73.27 \\
			KTO & 76.41 & 68.77 & 81.12 & 57.37 & 76.09 & 84.99 & 74.12 \\
			ORPO & 75.84 & 66.38 & 79.66 & 53.83 & 75.22 & 67.48 & 69.73 \\
			SLiC-HF & 75.01 & 67.06 & 79.33 & 55.03 & 74.51 & 79.98 & 71.82 \\
			R-DPO & 75.92 & 68.00 & 80.60 & 54.07 & 75.93 & 84.23 & 73.13 \\
			SimPO & 76.37 & 68.69 & 81.24 & 56.73 & 76.80 & 84.84 & 74.11 \\
			LSC-DPO & 76.50 & 68.52 & 81.38 & 57.50 & 76.01 & 84.53 & 74.07 \\
			\bottomrule
		\end{tabular}
	}
\end{table}

\end{document}